\documentclass{article}

\PassOptionsToPackage{square,numbers,sort}{natbib}

\usepackage[preprint]{neurips_2026}

\usepackage[utf8]{inputenc} 
\usepackage[T1]{fontenc}
\usepackage{hyperref}
\usepackage{url}
\usepackage{booktabs}
\usepackage{amsfonts}
\usepackage{nicefrac}
\usepackage{microtype}
\usepackage{xcolor}
\usepackage{soul}
\usepackage{tabularx}
\usepackage{amsmath}
\usepackage{amssymb}
\usepackage{amsthm}

\newtheorem{theorem}{Theorem}
\newtheorem{proposition}[theorem]{Proposition}
\newtheorem{remark}{remark}
\usepackage{graphicx}
\usepackage{tablefootnote}
\usepackage{caption}
\usepackage[table]{xcolor}

\usepackage{multirow}
\usepackage{pgfplots}
\pgfplotsset{compat=1.18}
\usepackage{subcaption} 
\usepackage{wrapfig}
\usepackage{paracol}
\usepackage{geometry}
\usepackage{tikz}
\title{\emph{Beyond Position Bias}: Shifting Context Compression from Position-Driven to Semantic-Driven}

\author{%
  Jiwei Tang$^{1}$\thanks{Equal Contribution. $^{\ddag}$ Corresponding Author.},  Zhijing Huang$^{1*}$, Xinyu Zhang$^{1}$, Chen Jason Zhang$^{2}$, \\ \bf Jianxing Yu$^{1}$, Libin Zheng$^{1{\ddag}}$, Rui Meng$^{3}$, Jian Yin$^{1}$\\
 $^{1}$Sun Yat-sen University \quad $^{2}$Hong Kong Polytechnic University \\
$^{3}$Beijing Normal–Hong Kong Baptist University \\
\texttt{jiweitang23@gmail.com} \quad \texttt{zhenglb6@mail.sysu.edu.cn}
}

\begin{document}

\maketitle

\begin{abstract}
Large Language Models (LLMs) have demonstrated exceptional performance across diverse tasks. However, their deployment in long-context scenarios faces high computational overhead and information redundancy. While soft prompt compression has emerged as a promising way to mitigate these costs by compressing sequences into compact embeddings, existing paradigms remain fundamentally constrained by \textbf{position bias}: they primarily rely on learnable tokens insertion at fixed positions or group tokens according to their physical token layout, thereby inducing performance instability and semantic fragmentation. To overcome this bottleneck, we propose \textbf{Se}mantic \textbf{Co}nsistency Context Compression (\textbf{SeCo}), a method that shifts context compression from \emph{position-driven} to \emph{semantic-driven}. Rather than constraint by physical token layout, SeCo dynamically anchors compression \emph{directly} in the semantic space by selecting query-relevant tokens as semantic centers and aggregating remaining tokens via consistency-weighted merging. This design inherently preserves semantic consistency while eliminating position bias. Extensive experiments on 14 benchmarks across two backbone models demonstrate that SeCo consistently shows superiority in downstream tasks, inference latency, and out-of-domain robustness. The code is available at \url{https://anonymous.4open.science/r/seco-EE5E}.
\end{abstract}

\definecolor{lightblue}{rgb}{0.92, 0.95, 1.0}
\sethlcolor{lightblue}
\section{Introduction}

Large Language Models (LLMs) have demonstrated remarkable capabilities across a wide range of tasks, catalyzing a paradigm shift in Natural Language Processing (NLP)~\citep{DBLP:journals/corr/abs-2407-10671,DBLP:journals/corr/abs-2501-12948,kimiteam2025kimik2openagentic}. Real-world applications increasingly demand longer prompt inputs, such as Retrieval-Augmented Generation (RAG)~\citep{DBLP:conf/nips/LewisPPPKGKLYR020}, In-Context Learning (ICL)~\citep{DBLP:conf/emnlp/Dong0DZMLXX0C0S24}, and the deployment of LLMs as agents for long-horizon task planning~\citep{DBLP:journals/corr/abs-2510-11967}. Despite the enhanced contextual coverage afforded by long sequences, deploying LLMs directly on such inputs faces (1) significant information redundancy~\citep{liu2024forgettingcurvereliablemethod,DBLP:conf/acl/JiangWL0L0Q24,DBLP:conf/iclr/00010WWCW24,DBLP:conf/naacl/TangXLZZHZ25,DBLP:journals/corr/abs-2505-12215} and (2) high computational load caused by Transformer's quadratic time complexity~\citep{DBLP:conf/nips/VaswaniSPUJGKP17,DBLP:journals/corr/abs-2602-01840,DBLP:journals/corr/abs-2602-01719} on its self attention mechanism, limiting their real-world utility.

\begin{figure}[thb]
    \centering
    \begin{subfigure}[b]{0.3\textwidth}
      \hspace*{-0.4cm}
      \includegraphics[height=4cm, keepaspectratio]{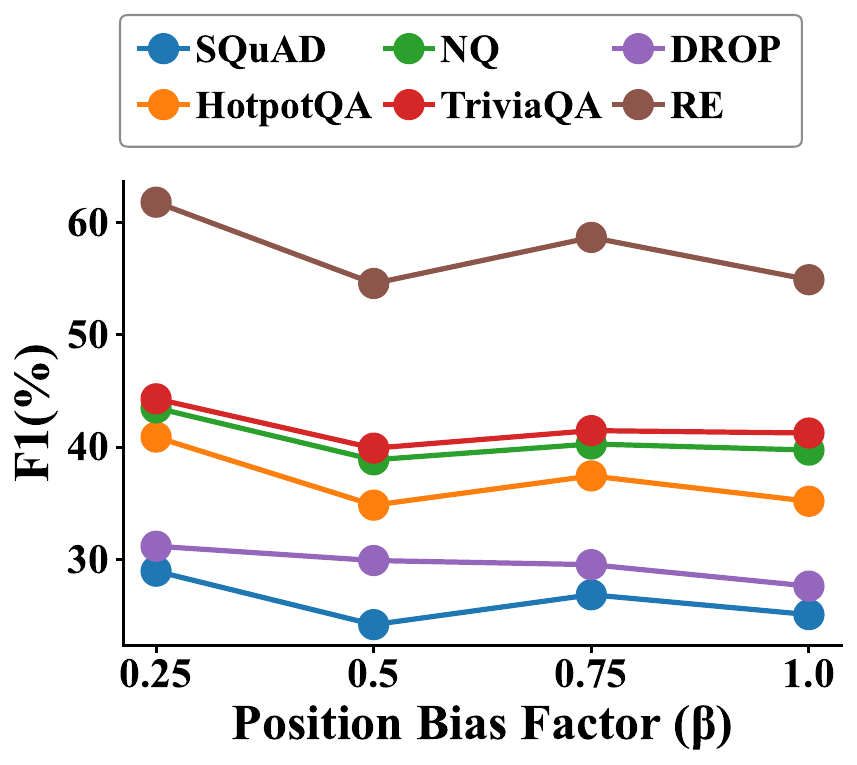}
      \caption{Positional Encoding Bias}
      \label{fig:pos-encoding-bias}
    \end{subfigure}
    \begin{subfigure}[b]{0.3\textwidth}
      \centering
      \hspace*{-0.15cm}
      \includegraphics[height=4cm, keepaspectratio]{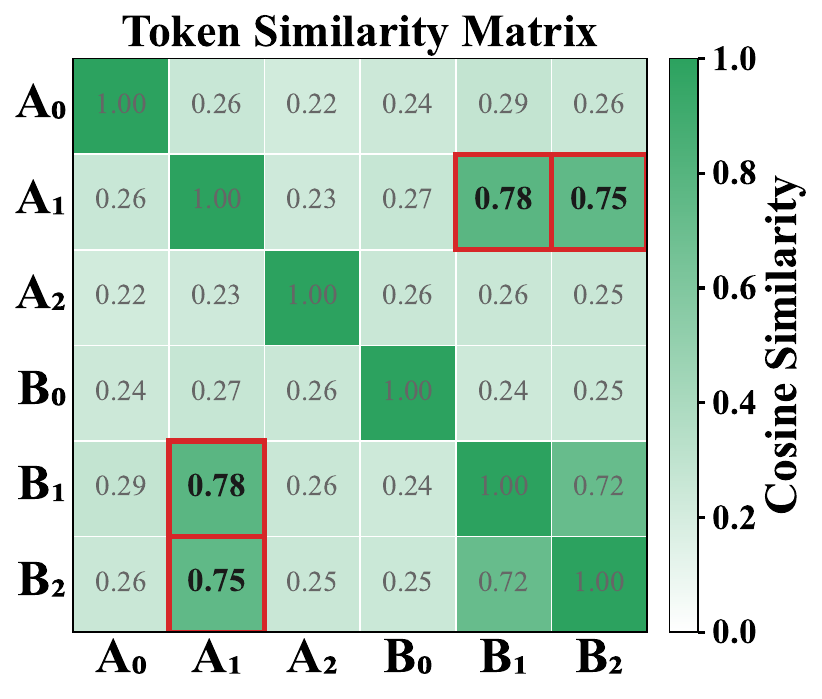}
      \caption{Group-based Position Bias}
      \label{fig:group-based-position-Bias}
    \end{subfigure}
    \begin{subfigure}[b]{0.3\textwidth}
      \centering
      \hspace*{0.2cm}
      \includegraphics[height=4cm, keepaspectratio]{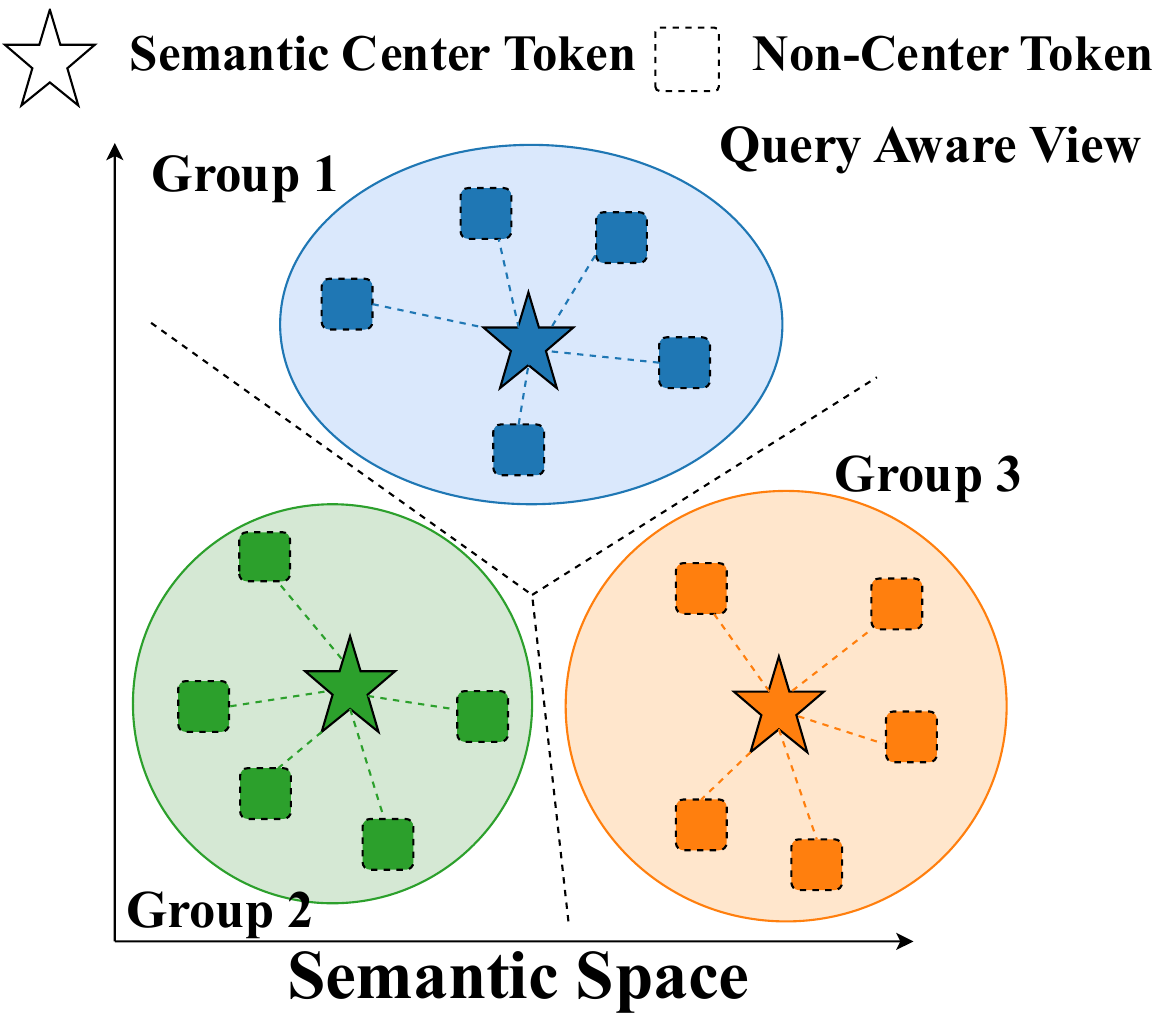}
      \caption{SeCo}
      \label{fig:Intro-SeCo}
    \end{subfigure}
    \caption{Comparison between existing methods and SeCo. 
    (a) \textbf{Performance Instability.} Learnable token insertion method, such as ICAE~\citep{DBLP:conf/iclr/00010WWCW24}, exhibit pronounced sensitivity to the position bias factor $\beta$, which controls the positional encoding of learnable tokens (as detailed in Figure~\ref{fig:Position_Encoding_Bias}). Consequently, F1 scores fluctuate substantially across various benchmarks under different $\beta$ configurations.
    (b) \textbf{Semantic Fragmentation.} Token $A_1$ in Group A is semantically highly correlated with tokens $B_1$ and $B_2$ in Group B. However, position-based grouping rigidly partitions these semantically related tokens (highlighted in red boxes) into separate compression groups.
    (c) \textbf{Semantic Consistency Compression (SeCo).} SeCo selects the top-$k$ tokens most relevant to the query as semantic centers. Subsequently, remaining tokens are dynamically assigned and merged \emph{directly} in the semantic space based on query-centric semantic consistency.}
    \label{fig:seco_intro}
\end{figure}

Prompt compression~\citep{DBLP:conf/naacl/LiLSC25} has emerged as a promising way to shorten sequence lengths and mitigate redundancy. Existing methods can be mainly categorized into hard prompt compression~\citep{DBLP:conf/acl/JiangWL0L0Q24,DBLP:conf/acl/PanWJXLZLR0LZQ024,DBLP:conf/naacl/TangXLZZHZ25,DBLP:conf/iclr/ChirkovaFNC25,DBLP:conf/emnlp/LuoCL25,DBLP:conf/acl/HwangCJSHP25,DBLP:journals/corr/abs-2510-08907} and soft prompt compression~\citep{DBLP:conf/iclr/00010WWCW24,DBLP:conf/iclr/Zhang0XSYD25,DBLP:conf/acl/LiSC25,DBLP:conf/acl/ChenLXZWHSSTL025,DBLP:conf/acl/DaiLHZZWXL25,10.1145/3701551.3703527,DBLP:journals/corr/abs-2505-12215,DBLP:journals/corr/abs-2602-01840,DBLP:journals/corr/abs-2602-01719}. Hard compression methods typically discard non-selected tokens entirely, failing to leverage secondary contextual information and compromising semantic coherence, which limits information utility~\citep{DBLP:journals/corr/abs-2508-15813}. In contrast, soft prompt compression exploits inherent redundancy in the semantic space~\citep{DBLP:conf/emnlp/Ethayarajh19,DBLP:conf/acl/AghajanyanGZ20} by learning compact soft embeddings, typically preserving richer semantic information, especially under high compression rates~\citep{DBLP:journals/corr/abs-2505-12215}. However, we identify a critical bottleneck: existing soft compression methods, which can be categorized into learnable token insertion~\citep{DBLP:conf/iclr/00010WWCW24,DBLP:conf/iclr/Zhang0XSYD25,DBLP:conf/acl/LiSC25,DBLP:conf/acl/ChenLXZWHSSTL025,DBLP:conf/acl/DaiLHZZWXL25,10.1145/3701551.3703527} and position-based grouping~\citep{DBLP:conf/acl/CaoCLPHCS24,DBLP:journals/corr/abs-2505-12215,DBLP:journals/corr/abs-2602-01840,DBLP:journals/corr/abs-2602-01719}, rely heavily on positional priors, introducing \textbf{position bias}.

Specifically, current soft prompt compression methods suffer from two systematic forms of \textbf{position bias}: (1) \textbf{Positional encoding bias.} As illustrated in Figure~\ref{fig:pos-encoding-bias}, methods employing learnable token insertion (e.g., ICAE~\citep{DBLP:conf/iclr/00010WWCW24}) exhibit significant instability in downstream tasks depending on the Positional Encodings (PEs) assigned to these tokens. This sensitivity to perturbations in PEs~\citep{DBLP:conf/nips/VaswaniSPUJGKP17,DBLP:journals/ijon/SuALPBL24} aligns with recent observations in~\cite{DBLP:conf/emnlp/ZhaoLLHXXZ25}. (2) \textbf{Group-based position bias.} The principle of semantic consistency posits that semantically coherent vector structures facilitate more effective compression~\citep{DBLP:journals/tit/GrayN98, DBLP:books/daglib/0016881, DBLP:journals/tit/Donoho06}. However, position-based grouping methods \emph{rely on the physical token layout}, potentially partitioning semantically related tokens into distinct compression groups (see Figure~\ref{fig:group-based-position-Bias}). This induces semantic fragmentation, thereby violating semantic consistency. Furthermore, due to the inherent sparsity of task-relevant information~\citep{DBLP:conf/emnlp/0001DGL23,DBLP:conf/acl/JiangWL0L0Q24,DBLP:conf/naacl/TangXLZZHZ25}, the correlation between such derived groups and the query is often highly imbalanced (i.e., certain groups are entirely irrelevant to the query). These findings collectively highlight a core limitation: \textit{over-reliance on positional priors constrains compression effectiveness.} This motivates our research: \textit{How can we mitigate the negative effects of position bias in soft prompt compression?}

To address this, we draw inspiration
from cognitive science: human cognition is fundamentally organized as a semantic network rather than a linear sequence, where concepts are linked by meaning rather than physical proximity~\citep{semantic_processing,word_concept}. Motivated
by this, we shift the compression paradigm from \textit{position-driven} to \textit{semantic-driven}, proposing the \textbf{Se}mantic Consistency \textbf{Co}ntext Compression (\textbf{SeCo}) framework (Figure~\ref{fig:Intro-SeCo}). SeCo eliminates the need for injecting learnable tokens at fixed positions, instead \emph{modeling context structure directly within the semantic space to avoid positional encoding bias}. To resolve semantic fragmentation caused by group-based postion bias, SeCo reconstructs the grouping logic guided by query semantics: we first compute the relevance between context tokens and the query to identify the top-$k$ most relevant tokens as semantic centroids. Subsequently, remaining tokens are dynamically assigned to these centroids based on semantic consistency, and compressed representations are generated via consistency-weighted aggregation within each center. \textit{This process is driven by semantic relevance rather than physical token layout; each token autonomously identifies its suitable compression group directly in the semantic space, effectively mitigating group-based position bias.}

Our contributions are summarized as follows: (1) We identify and analyze position bias as a bottleneck that fundamentally limits existing soft prompt compression methods. (2) We propose SeCo, a semantic-driven compression framework designed to mitigate position bias and enhance semantic consistency. (3) We conduct extensive experiments across two backbone models (i.e., LLaMA and Qwen) and 14 benchmarks (e.g., long context question-answering and summarization), including both in-domain and out-of-domain evaluations. Experiment results demonstrate that SeCo consistently delivers superior performance in terms of downstream tasks, computational efficiency, and robustness.
\section{Related Work}
\label{sec:related_work}

\paragraph{Hard Prompt Compression.}
Hard prompt compression aims to reduce input length by explicitly shortening the original prompt, either through removing less informative tokens or generating condensed textual summaries. The resulting compressed prompt remains in the form of discrete natural language. Existing approaches can be broadly grouped into four categories~\citep{DBLP:journals/corr/abs-2505-12215}. (1) Perplexity-based methods estimate token importance according to perplexity and typically adopt a coarse-to-fine compression strategy to progressively discard less critical content, as exemplified by LLMLingua~\citep{DBLP:conf/emnlp/JiangWLYQ23}, LongLLMLingua~\citep{DBLP:conf/acl/JiangWL0L0Q24}, and Perception Compressor~\citep{DBLP:conf/naacl/TangXLZZHZ25}. (2) Bidirectional semantic-based methods address the limitations of the unidirectional perplexity signal by incorporating bidirectional semantic information, as in LLMLingua-2~\citep{DBLP:conf/acl/PanWJXLZLR0LZQ024}, MOOSComp~\citep{DBLP:journals/corr/abs-2504-16786}, Provence~\citep{DBLP:conf/iclr/ChirkovaFNC25}, and SAC~\citep{DBLP:journals/corr/abs-2510-08907}. (3) Attention-based methods exploit the intrinsic attention patterns of LLMs to identify and retain salient information during compression, such as AttnComp~\citep{DBLP:conf/emnlp/LuoCL25}. (4) Summary-generation methods compress long contexts by producing concise linguistic summaries that preserve task-relevant information, as demonstrated by CompACT~\citep{DBLP:conf/emnlp/YoonLHJK24}, RECOMP~\citep{DBLP:conf/iclr/XuSC24}, and EXIT~\citep{DBLP:conf/acl/HwangCJSHP25}. \emph{Despite their effectiveness in improving inference efficiency, these hard compression methods inevitably operate in a lossy manner and may impair the semantic coherence of the original prompt.}

\paragraph{Soft Prompt Compression.} Compared with hard prompt compression methods that rely on discrete selection strategies, soft prompt compression compress context by distilling contextual information into a compact continuous embedding space. This line of research has attracted considerable attention in recent years. Some methods, such as GIST~\citep{DBLP:conf/nips/Mu0G23}, AutoCompressor~\citep{DBLP:conf/emnlp/ChevalierWAC23}, ICAE~\citep{DBLP:conf/iclr/00010WWCW24}, LLoCO~\citep{DBLP:conf/emnlp/TanLPWZK0P24}, Beacon~\citep{DBLP:conf/iclr/Zhang0XSYD25}, 500xCompressor~\citep{DBLP:conf/acl/LiSC25}, typically obtain compact compressed representations by introducing appending learnable tokens. Other methods, including QGC~\citep{DBLP:conf/acl/CaoCLPHCS24}, GMSA~\citep{DBLP:journals/corr/abs-2505-12215}, RAM~\citep{DBLP:journals/corr/abs-2602-01840}, and COMI~\citep{DBLP:journals/corr/abs-2602-01719}, instead group tokens according to their physical token layout, thereby learning compressed representations. While these methods achieve relative rich semantic density, they are constrained by \textit{position priors}. Whether through the injection of learnable tokens at fixed positions or grouping based on physical token layout, such reliance introduces \textbf{position bias} that leads to performance instability or semantic fragmentation. \emph{In contrast, SeCo avoids position constraints by selecting semantic centroids within the semantic space and allocating remaining tokens based on semantic consistency for compression, thereby effectively mitigating position bias.}
\section{Method}
\label{sec:method}

\begin{figure}
\centering
\includegraphics[width=0.8\linewidth]{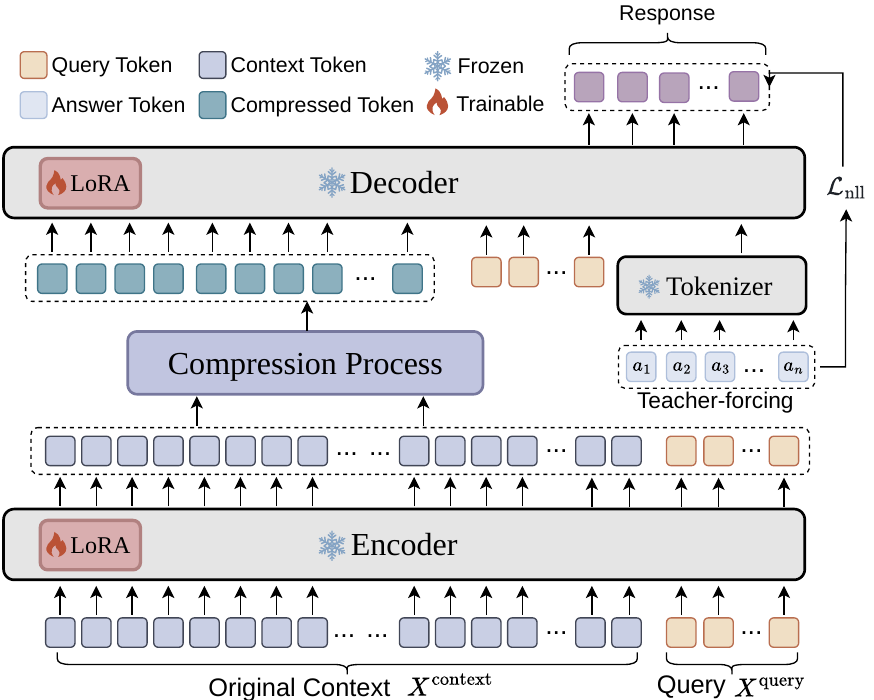}
\caption{The Training Paradigm of SeCo. SeCo is based on an encoder-decoder architecture. The original context $X^{\text{context}}$ and query $X^{\text{query}}$ are first encoded into hidden states, which are then compressed via a compression process (see Figure~\ref{fig:SeCo}). During training, the encoder and decoder are fine-tuned with Low-Rank Adaptation (LoRA)~\citep{DBLP:conf/iclr/HuSWALWWC22}.} 
\label{fig:FT_Training}
\end{figure}
\begin{figure}
\centering
\includegraphics[width=0.95\linewidth]{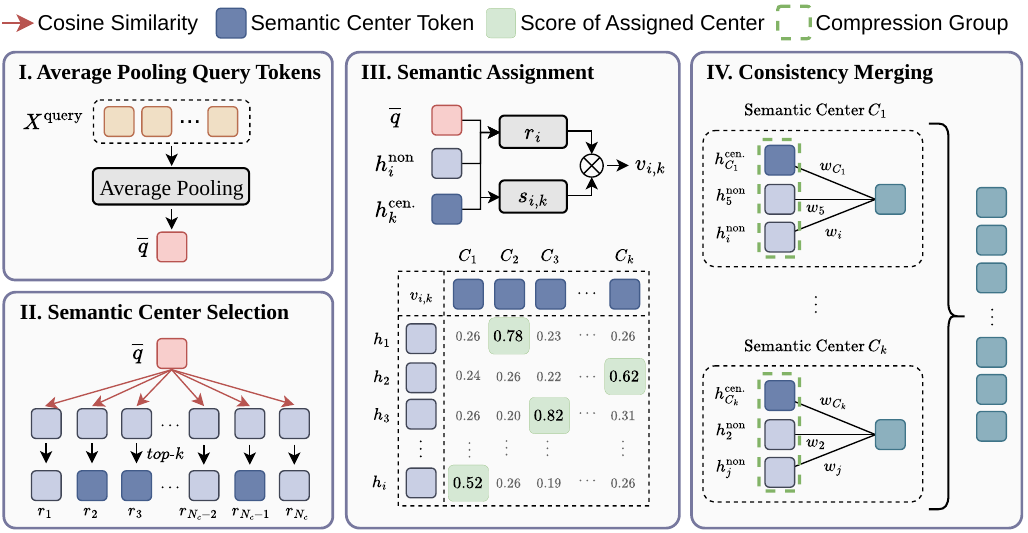}
\caption{The Compression Process of SeCo. Specifically, it sequentially performs four steps: \textbf{I. Average Pooling Query Tokens.} Obtain a single query representation $\bar{q}$ via average pooling; \textbf{II. Semantic Center Selection.} The \textit{top-k} tokens are selected as semantic center tokens based on the cosine similarity  between each context token and $\bar{q}$; \textbf{III. Semantic Assignment.} $h^{\text{cen.}}$ represents the selected semantic centers while $h^{\text{non}}$ denotes the remaining tokens. Each non-center token is assigned to the most relevant semantic center based on $v$ (defined in Eq.~\ref{eq:value}), where $v$ is the product of its cosine similarity with the semantic center and $\bar{q}$; \textbf{IV. Consistency Merging.} Each compression group is merged into a compressed representation based on the weights (defined in Eq.~\ref{eq:weight}) of the tokens within the group. Specifically, the weight of the semantic center is its cosine similarity with $\bar{q}$, and the weight of the non-center token is the product of their cosine similarity with $\bar{q}$ and the center of their respective group.}
\label{fig:SeCo}
\end{figure}

In this section, we present SeCo, a \emph{semantic-driven} framework that compresses long contexts into compact soft tokens while preserving task-relevant information. SeCo comprises an encoder for feature extraction and a decoder for downstream generation. The core compression mechanism proceeds in three stages: \textit{Semantic Center Selection}, \textit{Semantic Assignment}, and \textit{Consistency Merging}. By decoupling compression process from physical token layout, SeCo mitigates position bias and ensures semantic consistency within each compressed tokens.

\subsection{Semantic Feature Extraction}
Given an input sequence $X$ composed of a context $X^{\text{context}}$ and a query $X^{\text{query}}$, we employ a pre-trained language model as the encoder to extract contextualized hidden states:
\begin{equation}
    H = \texttt{Encoder}(X),
\end{equation}
where $H \in \mathbb{R}^{N \times d}$ denotes the last-layer hidden states, with $N$ being the total sequence length and $d$ the hidden dimension. We partition $H$ into context and query components, $H^{\text{context}} \in \mathbb{R}^{N_c \times d}$ and $H^{\text{query}} \in \mathbb{R}^{N_q \times d}$, where $N_c$ and $N_q$ represent their respective sequence lengths.

\subsection{Semantic Center Selection}
To achieve a target compression rate $\tau$, we determine the number of compressed tokens $K$ as:
\begin{equation}
\label{eq:topK}
    K = \max\!\left(2,\, \left\lceil \frac{N_c}{\tau} \right\rceil\right).
\end{equation}
Guided by the principle of task relevance, we first aggregate the query information into a single representation $\bar{q}$ by averaging its token embeddings:
\begin{equation}
    \bar{q} = \frac{1}{N_q} \sum_{i=1}^{N_q} h^{\text{query}}_i,
\end{equation}
where $h^{\text{query}}_i \in \mathbb{R}^d$ is the $i$-th token embedding in $H^{\text{query}}$.

Next, we quantify the semantic relevance of each context token to the query via cosine similarity:
\begin{equation}
    r_i = \frac{h^{\text{context}}_i \cdot \bar{q}}{\|h^{\text{context}}_i\| \|\bar{q}\|} \, .
    \label{eq:relevance}
\end{equation}
We select the $K$ context tokens with the highest similarity scores as semantic centers:
\begin{equation}
    \mathcal{C} = \left\{ i \in \{1,\dots,N_c\} \;\middle|\; r_i \in \operatorname{\textit{top}\text{-}\textit{k}}_{j=1}^{N_c}\{r_j\} \right\}, \quad |\mathcal{C}| = K.
\end{equation}
These centers serve as semantic anchors, ensuring that the compressed representation retains the most task-critical information.

\subsection{Semantic Assignment and Consistency Merging}

\paragraph{Semantic Assignment.}
Remaining non-center tokens, denoted as $H^{\text{non}} = H^{\text{context}} \setminus \{h_j\}_{j \in \mathcal{C}}$, we first compute the cosine similarity $s_{i,k}$ between $h^{\text{non}}_i$ and every center $h^{\text{center}}_k$:
\begin{equation}
    s_{i,k} = \frac{h^{\text{non}}_i \cdot h^{\text{center}}_k}{\|h^{\text{non}}_i\| \|h^{\text{center}}_k\|} \, .
    \label{eq:similarity}
\end{equation}
Then, we compute a joint assignment score $v(i, k)$:
\begin{equation}
    v(i, k) = r_i \cdot s_{i,k} \, .
    \label{eq:value}
\end{equation}
Token $i$ is assigned to the cluster $k^*$ that maximizes this score: $k^* = \arg\max_k\, v(i, k)$. This mechanism ensures that tokens are grouped not merely by proximity, but by their combined semantic coherence and utility to the query.

\paragraph{Consistency Merging.}

Within each compression group $\mathcal{G}_k$ (comprising the center and its assigned members), we first define a consistency weight $w_i$ for each token:
\begin{equation}
    w_i = \begin{cases}
        r_i & \text{if } i \in \mathcal{C}, \\
        v(i,k^*) & \text{otherwise.}
    \end{cases}
    \label{eq:weight}
\end{equation}
Center tokens are weighted by their direct task relevance ($r_i$), while non-center tokens are weighted by their joint assignment score ($v(i,k^*)$), balancing local semantic consistency with global task alignment. 

We then generate a  compressed representation $\tilde{h}_k$ via a consistency-weighted aggregation:
\begin{equation}
    \alpha_i = \frac{\exp(w_i)}{\sum_{j \in \mathcal{G}_k} \exp(w_j)}, \quad \tilde{h}_k = \sum_{i \in \mathcal{G}_k} \alpha_i \cdot h_i.
\end{equation} 

The final compressed context is formed by concatenating these aggregated representations:
\begin{equation}
    \widetilde{H} = \left[\tilde{h}_1,\, \ldots,\, \tilde{h}_K\right] \in \mathbb{R}^{K \times d}.
\end{equation}
$\widetilde{H}$ serves as the compressed embeddings for the subsequent training/decoding stage.

\subsection{Training Objective}
The training loss is:
\begin{equation}
    \mathcal{L}_{\text{nll}} = -\sum_{i=1}^{N_{a}} \log p_{\phi}\!\left(a_i \mid \widetilde{H}, X^{\text{query}},\, a_{<i}\right) \, ,
\end{equation}
where $p_\phi(\cdot)$ is the decoder's output distribution; $a_i$ denotes the $i$-th answer token; $N_{a}$ represents the token length of answer. Both the encoder and decoder are fine-tuned using LoRA.
\definecolor{lightblue}{rgb}{0.92, 0.95, 1.0}
\section{Experiment}
\label{4-experiment}

In this section, we aim to answer the following Research Questions (RQs): (1) How does SeCo perform on various downstream tasks (RQ1)?  (2) Compared with other methods, does SeCo have an advantage in inference latency (RQ2)? (3) How effective are the individual components of SeCo (RQ3)?

\subsection{Settings}

\paragraph{Training.} 
The training paradigm of SeCo is shown in Figure~\ref{fig:FT_Training}. Our method uses the MRQA~\citep{DBLP:conf/acl-mrqa/FischTJSCC19} datasets, partitioned into in-domain and out-of-domain sets. The in-domain data, used for both training and internal testing, comprises six subsets: SQuAD~\citep{DBLP:conf/emnlp/RajpurkarZLL16}, NewsQA~\citep{DBLP:conf/rep4nlp/TrischlerWYHSBS17}, TriviaQA~\citep{DBLP:conf/acl/JoshiCWZ17}, SearchQA~\citep{DBLP:journals/corr/DunnSHGCC17}, HotpotQA~\citep{DBLP:conf/emnlp/Yang0ZBCSM18} and NaturalQuestions~\citep{DBLP:journals/tacl/KwiatkowskiPRCP19}. To evaluate generalization, we employed an out-of-domain test set consisting of DROP~\citep{DBLP:conf/naacl/DuaWDSS019}, BioASQ~\citep{DBLP:journals/bmcbi/TsatsaronisBMPZ15}, DuoRC.ParaphraseRC~\citep{DBLP:conf/acl/KhapraSSA18}, TextbookQA~\citep{DBLP:conf/cvpr/KembhaviSSCFH17}, RelationExtraction~\citep{DBLP:conf/conll/LevySCZ17} and RACE~\citep{DBLP:conf/emnlp/LaiXLYH17}.

During training, we set two compression rates for each training sample (i.e., 16$\times$ and 32$\times$). SeCo is implemented based on \texttt{LLaMA-3.2-1B-Instruct} and \texttt{Qwen3-4B-Instruct}. Furthermore, to assess SeCo's performance in summarization and long-context QA, we incorporate DialogSum~\citep{DBLP:conf/acl/ChenLCZ21}, a multi-turn dialogue summarization benchmark, and NarrativeQA~\citep{DBLP:journals/tacl/KociskySBDHMG18}, which demands deep comprehension and reasoning over long-form stories and scripts. Further dataset descriptions and training details are provided in Appendix~\ref{apx:dataset_details} and Appendix~\ref{apx:train_details}, respectively. 

\paragraph{Evaluation Metrics.}
We evaluate QA performance using Exact Match (EM)~\citep{DBLP:journals/corr/abs-2510-08907} and F1 score~\citep{lin-2004-rouge}, and assess summarization quality via BERT-F1~\citep{DBLP:conf/iclr/ZhangKWWA20} computed with \texttt{roberta-large}\footnote{\url{https://huggingface.co/FacebookAI/roberta-large}}.

\paragraph{Baselines.}
We conduct a comparison with encoder-decoder based context compression methods, including task-agnostic methods (i.e., ICAE~\citep{DBLP:conf/iclr/00010WWCW24}, EPL~\citep{DBLP:conf/emnlp/ZhaoLLHXXZ25}, SAC~\citep{DBLP:journals/corr/abs-2510-08907}, 500xCompressor~\citep{DBLP:conf/acl/LiSC25}) and task-aware methods (i.e., RAM~\citep{DBLP:journals/corr/abs-2602-01840}, COMI~\citep{DBLP:journals/corr/abs-2602-01719}). For methods employing specific position layout strategies, we append the suffixes (EPL) or (DPL) to denote the adopted scheme. Specifically, Enhanced Position Layout (EPL) reassigns position IDs to learnable or compressed tokens to optimize structural alignment. In contrast, Default Position Layout (DPL) retains the standard sequential positional indices for these tokens without additional reorganization. \textbf{All baselines use the same training datasets and hyperparameters.}

\subsection{Main Result}
For RQ1, we analyze the performance of SeCo via its performance on diverse downstream tasks. Tables~\ref{tab:res_iid} and~\ref{tab:res_ood} report the results on in-domain and out-of-domain test sets. Under compression constraints of 16$\times$ and 32$\times$, SeCo demonstrates superior performance across various backbone models (i.e., \texttt{LLaMA3.2-1B-Instruct} and \texttt{Qwen3-4B-Instruct}). SeCo consistently outperforms both \textit{task-agnostic} and \textit{task-aware} baselines, across most test datasets in terms of average EM and F1. This highlights SeCo's ability to aggregate semantic space knowledge based on semantic cues provided by queries while eliminating the need of \textit{positional priors} used in the baselines. Figure~\ref{fig:summary_comparison} presents the results of the long-context QA and summarization. SeCo exhibits a consistent performance advantage over task-aware baselines (i.e., RAM and COMI). Furthermore, we conduct comprehensive stress tests on SeCo across varying compression rates, which demonstrate that SeCo consistently maintains superior performance in both in-domain and out-of-domain evaluations, underscoring its robustness (see Appendix~\ref{apx:stress_test}).

\begin{table}[htbp]
\centering
\tiny
\setlength{\tabcolsep}{3.0pt}
\caption{Main results of benchmark tests on In-Domain (ID) datasets. We \textbf{bold} the optimal results.}
\label{tab:res_iid}
\resizebox{\textwidth}{!}{%
\begin{tabular}{l|cccccccccccc|cc}
\toprule
\multirow{2}{*}{\textbf{Methods}} & \multicolumn{2}{c}{\textbf{SQuAD}} & \multicolumn{2}{c}{\textbf{NewsQA}} & \multicolumn{2}{c}{\textbf{TriviaQA}} & \multicolumn{2}{c}{\textbf{SearchQA}} & \multicolumn{2}{c}{\textbf{HotpotQA}} & \multicolumn{2}{c}{\textbf{NQ}} & \multicolumn{2}{c}{\textbf{Average}} \\
\cmidrule(lr){2-3} \cmidrule(lr){4-5} \cmidrule(lr){6-7} \cmidrule(lr){8-9} \cmidrule(lr){10-11} \cmidrule(lr){12-13} \cmidrule(lr){14-15}
& \textbf{EM} & \textbf{F1} & \textbf{EM} & \textbf{F1} & \textbf{EM} & \textbf{F1} & \textbf{EM} & \textbf{F1} & \textbf{EM} & \textbf{F1} & \textbf{EM} & \textbf{F1} & \textbf{EM} & \textbf{F1}  \\

\midrule
\multicolumn{15}{c}{\textit{16x compression constraint}} \\
\midrule
\multicolumn{15}{c}{\textbf{LLaMA-3.2-1B-Instruct}} \\
\midrule

RAM & 34.12 & 51.49 & 19.90 & 35.18 & 41.07 & 50.23 & 51.47 & 61.69 & 27.94 & 47.25 & 27.49 & 43.97 & 36.99 & 51.03 \\
COMI & 27.26 & 42.42 & 14.62 & 27.13 & 50.85 & 57.79 & 62.88 & 70.65 & 33.67 & 51.28 & 34.21 & 50.88 & 42.07 & 54.37 \\
500x (DPL) & 14.92 & 26.96 & 10.02 & 23.06 & 39.99 & 49.73 & 47.34 & 61.00 & 26.28 & 41.97 & 25.64 & 41.88 & 30.89 & 44.46 \\
500x (EPL) & 17.98 & 32.42 & 11.35 & 26.51 & 42.36 & 52.50 & 48.49 & 62.25 & 29.84 & 47.19 & 26.39 & 44.16 & 32.72 & 47.46 \\
SAC (EPL) & 19.00 & 33.96 & 11.89 & 27.93 & 41.01 & 51.57 & 47.66 & 61.47 & 30.35 & 46.98 & 26.38 & 43.76 & 32.56 & 47.38 \\
ICAE (EPL) & 17.94 & 32.09 & 10.59 & 24.37 & 36.06 & 45.71 & 44.17 & 57.67 & 26.98 & 42.56 & 24.70 & 42.82 & 29.89 & 44.24 \\
\midrule
\rowcolor{lightblue}
\textbf{SeCo} & \textbf{51.06} & \textbf{70.71 }& \textbf{29.91} & \textbf{50.81} & \textbf{63.47} & \textbf{71.94} & \textbf{68.60} & \textbf{76.66} & \textbf{46.28} & \textbf{67.09} & \textbf{40.64} & \textbf{60.51} & \textbf{53.52} & \textbf{68.55} \\

\midrule

\multicolumn{15}{c}{\textbf{Qwen3-4B-Instruct}} \\
\midrule

RAM & 17.28 & 29.25 & 7.48 & 16.87 & 40.35 & 45.97 & 49.83 & 56.26 & 18.86 & 31.75 & 15.70 & 29.40 & 28.96 & 38.75 \\
COMI & 15.88 & 27.85 & 6.77 & 16.31 & 39.87 & 45.67 & 49.27 & 55.96 & 17.69 & 31.08 & 16.63 & 29.95 & 28.52 & 38.39 \\
500x (DPL) & 23.93 & 39.96 & 14.58 & 31.06 & 50.96 & 60.73 & 55.91 & 69.28 & 35.74 & 52.68 & 35.74 & 54.02 & 39.99 & 55.03 \\
500x (EPL) & 32.50 & 52.67 & 20.16 & 39.70 & 55.00 & 65.15 & 59.45 & 71.93 & \textbf{44.62} & \textbf{63.13} & \textbf{42.09} & \textbf{61.55} & 45.82 & 62.04 \\
SAC (EPL) & 33.77 & 53.63 & 20.25 & 41.08 & 54.28 & 64.78 & 56.50 & 69.66 & 44.06 & 62.16 & 42.05 & 60.64 & 45.03 & 61.30 \\
ICAE (EPL) & 32.21 & 50.40 & 15.08 & 32.20 & 49.36 & 59.30 & 53.56 & 66.95 & 37.10 & 53.80 & 39.51 & 58.35 & 41.59 & 57.20 \\
\midrule
\rowcolor{lightblue}
\textbf{SeCo} & \textbf{44.98} & \textbf{65.56} & \textbf{27.40} & \textbf{48.63} & \textbf{60.94} & \textbf{70.19} & \textbf{69.77} & \textbf{78.22} & 38.30 & 61.16 & 32.54 & 54.84 & \textbf{49.65} & \textbf{65.84} \\
\midrule
\multicolumn{15}{c}{\textit{32x compression constraint}} \\
\midrule
\multicolumn{15}{c}{\textbf{LLaMA-3.2-1B-Instruct}} \\
\midrule

RAM & 28.81 & 43.75 & 17.57 & 30.33 & 39.38 & 47.96 & 46.80 & 56.83 & 23.91 & 40.98 & 25.05 & 41.23 & 33.33 & 46.32 \\
COMI & 18.99 & 32.70 & 12.89 & 24.73 & 44.38 & 51.64 & 59.05 & 66.69 & 26.76 & 42.76 & 29.10 & 45.99 & 36.64 & 48.52 \\
500x (DPL) & 12.51 & 23.54 & 9.43 & 21.82 & 36.69 & 46.16 & 46.63 & 59.93 & 22.44 & 36.39 & 23.09 & 38.67 & 28.81 & 41.69 \\
500x (EPL) & 13.98 & 26.15 & 9.76 & 22.21 & 37.93 & 48.15 & 44.75 & 58.16 & 23.49 & 38.21 & 23.28 & 39.65 & 28.86 & 42.34 \\
SAC (EPL) & 14.75 & 27.13 & 9.26 & 22.74 & 38.79 & 48.52 & 46.73 & 60.56 & 23.22 & 37.93 & 23.96 & 40.77 & 29.78 & 43.52 \\
ICAE (EPL) & 15.44 & 28.30 & 8.86 & 20.65 & 34.80 & 44.93 & 44.13 & 58.21 & 24.17 & 38.30 & 23.86 & 40.34 & 28.66 & 42.36 \\
\midrule
\rowcolor{lightblue}
\textbf{SeCo} & \textbf{50.34} & \textbf{69.47} & \textbf{29.91} & \textbf{49.91} & \textbf{60.46} & \textbf{69.51} & \textbf{68.00} & \textbf{75.67} & \textbf{45.18} & \textbf{65.43} & \textbf{39.02} & \textbf{58.85} & \textbf{52.35} & \textbf{67.12} \\

\midrule

\multicolumn{15}{c}{\textbf{Qwen3-4B-Instruct}} \\
\midrule

RAM & 13.45 & 24.10 & 5.25 & 13.01 & 37.19 & 42.56 & 46.13 & 52.36 & 14.01 & 25.13 & 13.38 & 25.51 & 25.60 & 34.42 \\
COMI & 13.11 & 24.11 & 4.77 & 12.51 & 37.55 & 42.77 & 46.50 & 52.61 & 13.83 & 25.12 & 14.16 & 26.46 & 25.82 & 34.70 \\
500x (DPL) & 23.34 & 38.30 & 14.34 & 30.50 & 50.89 & 60.38 & 56.65 & 71.69 & 34.43 & 51.05 & 34.22 & 50.97 & 39.61 & 54.52 \\
500x (EPL) & 23.69 & 40.29 & 14.84 & 32.31 & 50.98 & 61.01 & 58.28 & 70.56 & 37.54 & 54.89 & \textbf{36.55} & 37.54 & 41.03 & 56.05 \\
SAC (EPL) & 25.02 & 41.64 & 15.86 & 33.88 & 53.50 & 63.32 & 56.41 & 69.45 & 36.84 & 53.93 & 35.46 & 54.03 & 40.82 & 56.07 \\
ICAE (EPL) & 25.96 & 43.78 & 12.51 & 27.81 & 46.68 & 56.73 & 46.40 & 60.19 & 32.20 & 48.20 & 33.42 & 52.06 & 36.00 & 51.41 \\
\midrule
\rowcolor{lightblue}
\textbf{SeCo} & \textbf{42.96} & \textbf{63.46} & \textbf{26.23} &\textbf{ 46.57} & \textbf{60.71} & \textbf{69.43} & \textbf{67.29} & \textbf{75.59} & \textbf{38.49} & \textbf{60.37} & 35.23 & \textbf{56.26} & \textbf{49.06} & \textbf{64.67} \\

\midrule
\end{tabular}%
}
\end{table}

\begin{table}[htbp]
\centering
\tiny
\setlength{\tabcolsep}{3.0pt}
\caption{Main results of benchmark tests on Out-Of-Domain (OOD) datasets using \texttt{LLaMA-3.2-1B-Instruct} as backbone.}
\label{tab:res_ood}
\resizebox{\textwidth}{!}{%
\begin{tabular}{l|cccccccccccc|cc}
\toprule
\multirow{2}{*}{\textbf{Methods}} & \multicolumn{2}{c}{\textbf{DROP}} & \multicolumn{2}{c}{\textbf{BioASQ}} & \multicolumn{2}{c}{\textbf{DuoRC}} & \multicolumn{2}{c}{\textbf{TQA}} & \multicolumn{2}{c}{\textbf{RE}} & \multicolumn{2}{c}{\textbf{RACE}} & \multicolumn{2}{c}{\textbf{Average}} \\
\cmidrule(lr){2-3} \cmidrule(lr){4-5} \cmidrule(lr){6-7} \cmidrule(lr){8-9} \cmidrule(lr){10-11} \cmidrule(lr){12-13} \cmidrule(lr){14-15}
& \textbf{EM} & \textbf{F1} & \textbf{EM} & \textbf{F1} & \textbf{EM} & \textbf{F1} & \textbf{EM} & \textbf{F1} & \textbf{EM} & \textbf{F1} & \textbf{EM} & \textbf{F1} & \textbf{EM} & \textbf{F1}  \\
\midrule
\multicolumn{15}{c}{\textit{16x compression constraint}} \\
\midrule

RAM & 11.11 & 18.39 & 22.67 & 34.72 & 19.79 & 30.47 & 22.09 & 29.26 & 32.94 & 46.93 & 13.50 & 25.33 & 22.83 & 33.74 \\
COMI & 16.10 & 24.67 & 27.93 & 40.08 & 9.86 & 16.42 & 15.17 & 22.55 & 42.57 & 58.81 & 9.35 & 19.37 & 24.46 & 35.54 \\
500x (DPL) & 21.42 & 29.94 & 24.34 & 35.96 & 6.80 & 12.89 & 17.43 & 30.78 & 21.64 & 34.15 & 2.52 & 14.16 & 17.72 & 28.54 \\
500x (EPL) & 23.02 & 32.85 & 20.28 & 34.05 & 15.12 & 23.66 & 24.75 & 41.10 & 30.66 & 43.08 & 3.41 & 18.90 & 22.60 & 35.05 \\
SAC (EPL) & 24.55 & 33.89 & 20.41 & 31.60 & 15.19 & 24.34 & 23.35 & 39.34 & 33.96 & 47.04 & 3.71 & 18.78 & 23.68 & 35.86 \\
ICAE (EPL) & 21.82 & 31.55 & 26.53 & 38.18 & 11.53 & 18.38 & 18.50 & 30.76 & 38.74 & 54.18 & 2.08 & 14.54 & 24.23 & 36.14 \\
\midrule
\rowcolor{lightblue}
\textbf{SeCo} & \textbf{26.61} & \textbf{39.94}& \textbf{45.68} & \textbf{60.07} & \textbf{35.04} & \textbf{48.76} & \textbf{42.12} & \textbf{53.48} & \textbf{56.82} & \textbf{73.29} & \textbf{23.29} & \textbf{39.98} & \textbf{42.33} & \textbf{56.78} \\

\midrule

\multicolumn{15}{c}{\textit{32x compression constraint}} \\
\midrule

RAM & 16.50 & 23.67 & 25.53 & 32.91 & 16.99 & 25.95 & 20.63 & 28.37 & 24.86 & 37.15 & 12.91 & 23.60 & 20.94 & 30.32 \\
COMI & 13.91 & 21.38 & 24.60 & 36.59 & 8.59 & 13.72 & 16.50 & 24.92 & 36.43 & 51.46 & 7.27 & 16.56 & 21.58 & 31.98 \\
500x (DPL) & 19.96 & 28.35 & 22.47 & 32.47 & 6.26 & 11.54 & 16.43 & 28.16 & 28.26 & 39.96 & 1.48 & 11.96 & 18.91 & 28.75 \\
500x (EPL) & 20.96 & 29.43 & 20.01 & 30.79 & 10.73 & 18.73 & 21.29 & 35.80 & 31.07 & 42.59 & 2.97 & 15.49 & 21.10 & 32.02 \\
SAC (EPL) & 20.83 & 30.42 & 19.95 & 31.32 & 10.13 & 17.27 & 21.62 & 37.18 & 25.37 & 37.15 & 3.12 & 15.75 & 19.30 & 30.60 \\
ICAE (EPL) & 21.22 & 30.45 & 23.01 & 34.70 & 3.80 & 7.31 & 19.49 & 32.57 & 36.23 & 49.57 & 2.97 & 14.81 & 21.83 & 32.60 \\
\midrule
\rowcolor{lightblue}
\textbf{SeCo} & \textbf{25.95} & \textbf{39.07} & \textbf{43.35} & \textbf{58.47} & \textbf{34.71} & \textbf{48.53} & \textbf{41.78} & \textbf{52.67} & \textbf{57.02} & \textbf{73.54} & \textbf{23.00} & \textbf{40.04} & \textbf{41.80} & \textbf{56.31} \\
\midrule

\end{tabular}%
}
\end{table}

\begin{figure}[htbp]
    \centering
    \begin{subfigure}[b]{0.46\textwidth}
        \centering
        \hspace*{-0.5cm}
        \includegraphics[width=\linewidth]{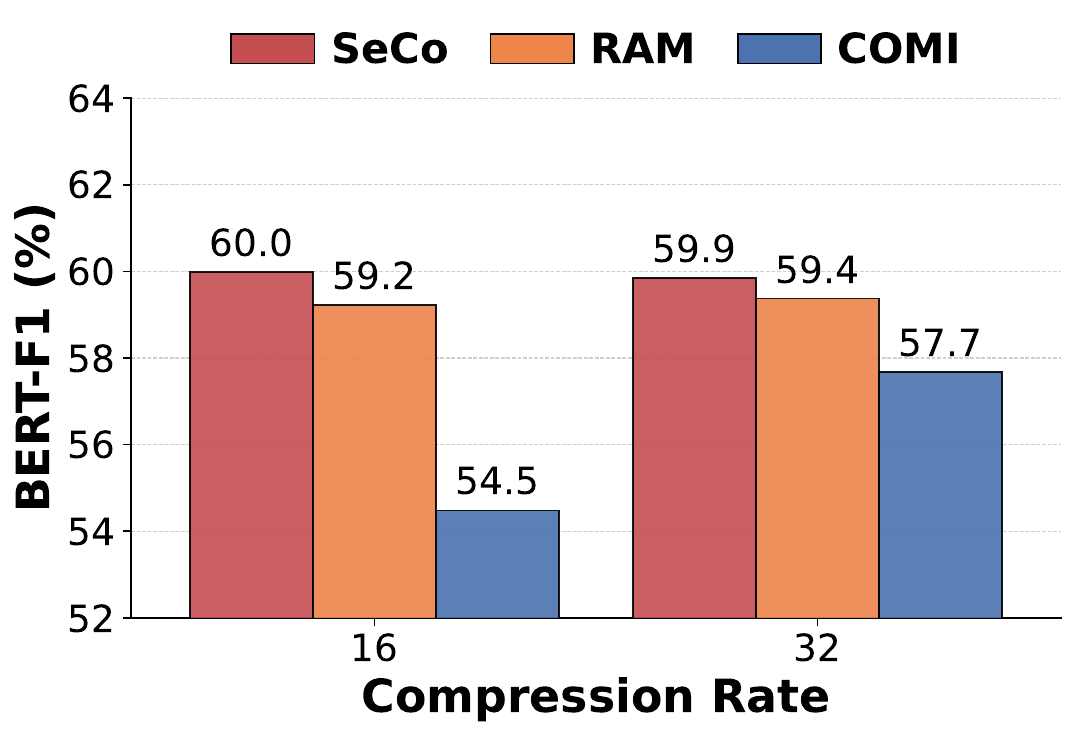}
        \caption{DialogSum}
        \label{fig:dialogsum}
    \end{subfigure}
    \hfill
    \begin{subfigure}[b]{0.46\textwidth}
        \centering
        \hspace*{-0.5cm}
        \includegraphics[width=\linewidth]{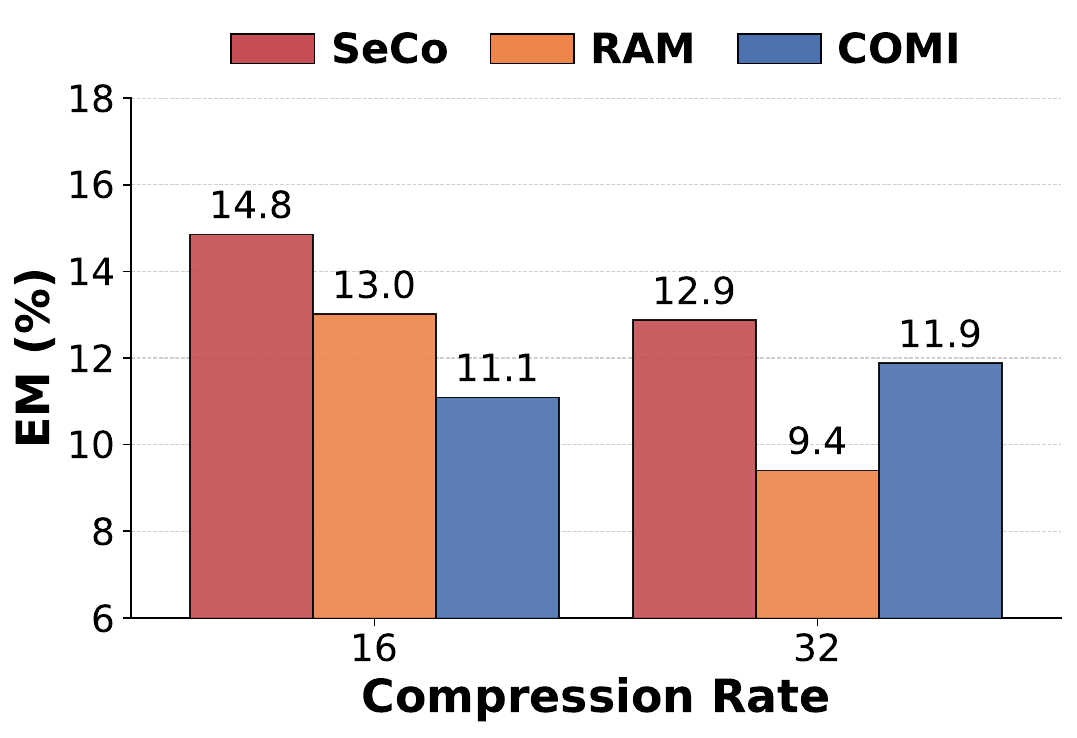}
        \caption{NarrativeQA}
        \label{fig:narrativeqa}
    \end{subfigure}
    \caption{Additional results on long context and summarization benchmarks compared with task-aware methods (i.e., COMI and RAM) using \texttt{Qwen3-4B-Instruct} as the backbone.}
    \label{fig:summary_comparison}
\end{figure}

\subsection{Efficiency Analysis}
For RQ2, we conduct the following inference latency experiment. SeCo replaces the original context with soft prompt to accelerate the inference process, thereby reducing the inference cost of processing the original context in the generation stage by a factor of $\tau$. The overall computational overhead includes the computational costs of both the compression and generation stages.

\begin{table}[h]
\centering
\footnotesize
\setlength{\tabcolsep}{2.0pt}
\caption{Latency evaluation (in milliseconds) under a 16$\times$ compression constraint across different datasets. Compression latency and generation latency denote the latency for the compression and generation phases, respectively. Total latency is the sum of both phases. The best results are highlighted in \textbf{bold}, and the second-best results are \underline{underlined}.}
\label{tab:eval_lat}
\begin{tabular}{l|ccc|ccc|ccc}
\toprule
\multirow{2}{*}{\textbf{Methods}} &\multicolumn{3}{c|}{\textbf{SQuAD}} & \multicolumn{3}{c|}{\textbf{NewsQA}} & \multicolumn{3}{c}{\textbf{NQ}}  \\
\cmidrule(lr){2-4} \cmidrule(lr){5-7}\cmidrule(lr){8-10}
& \textbf{Comp. Lat.} & \textbf{Gen. Lat.} & \textbf{Total Lat.} & \textbf{Comp. Lat.} & \textbf{Gen. Lat.} & \textbf{Total Lat.} & \textbf{Comp. Lat.} & \textbf{Gen. Lat.} & \textbf{Total Lat.} \\

\midrule
COMI & 14.10  & \textbf{44.93}  & 59.03  & 28.11 & \underline{46.80} & 74.91 &  17.81 &  \underline{49.24} &  \underline{67.05} \\
500x (DPL) & 11.40  & 47.54 & 58.94 & 21.29 & 50.46 & 71.75 & 14.45  & 91.86  & 106.31 \\
500x (EPL) & \underline{11.22}  & 48.30 &  59.52 & 21.22 & 48.74 & 69.96 & \underline{14.25} & 57.52 & 71.77 \\
SAC (EPL) & 12.31 & 51.32 & 63.63 & 22.59  &  56.58  & 79.17 & 15.10  & 66.09  &  81.19 \\ 
ICAE (EPL) & 11.56 & 45.90  &  \underline{57.46}  & \underline{21.16} & 48.47 & \underline{69.63} &  14.56   &  61.24   &  75.80 \\
\midrule
\rowcolor{lightblue}
\textbf{SeCo} & \textbf{10.23} & \underline{45.67} & \textbf{55.90} & \textbf{14.30} & \textbf{46.43} & \textbf{60.73} & \textbf{11.90}  &  \textbf{46.77}  & \textbf{58.67} \\
\bottomrule
\end{tabular}
\end{table}
The compression process of SeCo comprises three distinct phases:
(1)~a single full forward pass of the encoder over the input sequence;
(2)~compute the cosine similarity between each of the context tokens and the query semantics $\bar{q}$;
(3)~compute the pairwise similarity matrix between the non-center tokens and the semantic centers, followed by consistency-weighted  aggregation.

The compression process can be represented as:
\begin{equation}
\mathrm{FLOPs}^{\text{comp.}} = F^{\mathrm{Encoder}}(L) + F^{\mathrm{Selection}}(L_c, d) + F^{\mathrm{Assignment}}(L_c, K, d) + F^{\mathrm{Merging}}(L_c, K, d),
\end{equation}
where $L_c$ and $d$ represent the context lengths and hidden dimension, respectively. $K$ is the number of semantic centers defined in Eq.~\ref{eq:topK}.
Specifically, $F^{\mathrm{Encoder}}(L)$ is the cost of prefilling process;
$F^{\mathrm{Selection}}(L_c,d)$ denotes the \textit{Semantic Center Selection} cost
(complexity~$O(L_c d)$, with $L_c \ll L^2$);
$F^{\mathrm{Assignment}}(L_c,K,d)$ refers the \textit{Semantic Assignment} cost
(complexity~$O(L_c Kd)$, with $L_cK \ll L^2$);
and $F^{\mathrm{Merging}}(L_c,K,d)$ represents the \textit{Consistency Merging} cost
(complexity~$O(L_c d)$, with $L_c \ll L^2$). Therefore, aforementioned operations introduce only lightweight overhead.

For the generation stage, assuming the answer length is $L_a$, the decoder performs $L_a$ forward passes autoregressively over the compressed prefix of length $K + L_q$. The FLOPs for the $i$-th forward pass are:
\begin{equation}
    \mathrm{FLOPs}^{\text{generation}} = \sum_{i=1}^{L_a}F^{\mathrm{Decoder}}(K + L_q,\ i),
\end{equation}
where $F^{\mathrm{Decoder}}(n, i)$ denotes the FLOPs of a single decoder forward step with prefix length $n$ at generation step $i$. 
SeCo achieves the lowest total latency among all baselines, delivering superior speed under a 16$\times$ compression rate across multiple test datasets (see Table~\ref{tab:eval_lat}).

\subsection{Ablation Study}
To investigate RQ3 and understand the specific contributions of each component within SeCo, we conduct the following three ablation experiments. (see Table~\ref{tab:ablation}): (1) Our \textit{w/o} Query indicates that during the assignment and merging process, we do not use the cosine similarity between the non-center tokens and $\bar{q}$ as a weight reference, but instead rely solely on the cosine similarity between the non-center tokens and the semantic centers; (2) Our \textit{w/o} Consist. Merging refers to directly select the \textit{top}\text{-}\textit{k} most relevant tokens based on their similarity scores with $\bar{q}$ to achieve the target compression rate, bypassing the entire assignment and merging process; (3) Our \textit{w/} Uniform Sample means that the semantic centers are selected through equidistant and uniform sampling across the sequence, while keeping the subsequent assignment and merging logic identical to the original algorithm.

Omitting any component leads to a substantial performance drop, validating the necessity of each module. Specifically, removing query guidance (\textit{w/o} Query) eliminates relevance weighting between non-center tokens and the query, resulting in a lack of \textit{task-specific} focus during semantic assignment and consistency merging. Disabling the assignment mechanism (\textit{w/o} Consist. Merging) restricts the model to hard filtering, thereby discarding valuable contextual information from unselected tokens. Finally, replacing semantic center selection with uniform sampling (\textit{w/} Uniform Sample) ignores input semantic distributions, potentially aligning centers with query-irrelevant regions and degrading representation accuracy.
\begin{table}[htbp]
\centering
\caption{Ablation study on diverse benchmarks under 16$\times$ compression constraint. We use \texttt{LLaMA-3.2-1B-Instruct} as the backbone. ID and OOD refer to in-domain and out-of-domain  datasets, respectively.}
\label{tab:ablation}
\footnotesize
\setlength{\tabcolsep}{2.5pt}
\renewcommand{\arraystretch}{1.1}
\begin{tabular}{l | cc cc cc | cc cc cc | cc}
\toprule
\multirow{3}{*}{\textbf{Methods}} & \multicolumn{6}{c|}{\textbf{ID}} & \multicolumn{6}{c|}{\textbf{OOD}} & \multicolumn{2}{c}{\textbf{Overall}} \\
\cmidrule(lr){2-7} \cmidrule(lr){8-13} \cmidrule(lr){14-15}
& \multicolumn{2}{c}{\textbf{SearchQA}} & \multicolumn{2}{c}{\textbf{HotpotQA}} & \multicolumn{2}{c|}{\textbf{NQ}}  
& \multicolumn{2}{c}{\textbf{BioASQ}} & \multicolumn{2}{c}{\textbf{TQA}} & \multicolumn{2}{c|}{\textbf{RE}} & 
\multicolumn{2}{c}{\textbf{Average}} \\

\cmidrule(lr){2-3} \cmidrule(lr){4-5} \cmidrule(lr){6-7} \cmidrule(lr){8-9} \cmidrule(lr){10-11} \cmidrule(lr){12-13} \cmidrule(lr){14-15}
& \textbf{EM} & \textbf{F1} & \textbf{EM} & \textbf{F1} & \textbf{EM} & \textbf{F1} & \textbf{EM} & \textbf{F1} & \textbf{EM} & \textbf{F1} & \textbf{EM} & \textbf{F1} & \textbf{EM} & \textbf{F1} \\
\midrule
\rowcolor{lightblue}
\textbf{Default}     & \textbf{68.60} & \textbf{76.66}  & \textbf{46.28} & \textbf{67.09}  & \textbf{40.64} & \textbf{60.51} & \textbf{45.68} & \textbf{60.07}  & \textbf{42.12} & \textbf{53.48} & \textbf{56.82}  & \textbf{73.29} & \textbf{54.21} & \textbf{68.66} \\
\midrule
\textit{w/o} Query   & 57.65 & 67.62  & 41.98 & 63.70  & 35.58 & 55.25 & 31.32 & 47.73  & 29.87 & 39.70 & 52.04 & 69.30 & 46.28 & 61.65 \\
\textit{w/o} Consist. Merging   & 57.37 & 67.30   & 41.64 & 63.33  & 35.55 & 55.72 & 30.78 & 46.47  & 30.74 & 40.53 & 55.22 & 71.15 & 46.35 & 61.73 \\
\textit{w/} Uniform Sample & 61.01 & 70.62  & 40.96 & 62.97 & 35.78  & 55.64  & 29.59 & 45.20  & 32.87 & 42.81 & 51.63 & 68.15 & 47.59 & 62.83 \\
\midrule
\end{tabular}
\end{table}

\section{Conclusion}
We identify position bias as a bottleneck in existing soft prompt compression: compression effectiveness is often constraint by position prior, which can lead positional encoding bias and group-based bias. To overcome this limitation, we propose SeCo, a query-aware semantic-space compression framework that selects task-relevant centers and aggregates remaining context tokens via consistency merging while eliminating the need of positional priors. Across extensive benchmarks, backbones, and compression rates, SeCo consistently delivers superior downstream performance and better efficiency than prior prompt compression baselines. These results suggest that effective long-context compression can be learned \emph{directly} in semantic space. We hope SeCo offers a simple and promising step toward more effective and efficient long-context modeling.

\bibliographystyle{plainnat}
\bibliography{refs}

\medskip

\newpage
\appendix
\section{Positional Encoding Bias Statement}
\label{apx:Position Encoding Bias}
\begin{figure}[htbp]
    \centering
    
    \includegraphics[width=1.0\textwidth]{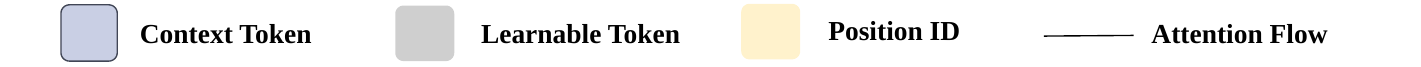}
    \vspace{0.5em}

    \begin{subfigure}[b]{0.45\textwidth}
        \includegraphics[width=\textwidth]{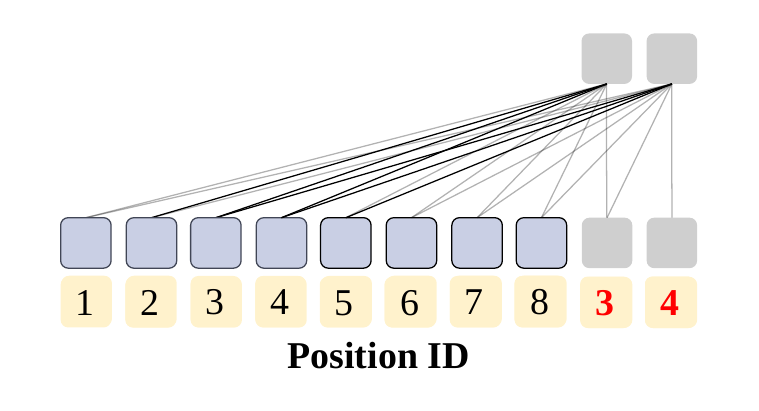}
        \caption{$\beta = 0.25$}
    \end{subfigure}
    \hfill
    \begin{subfigure}[b]{0.45\textwidth}
        \includegraphics[width=\textwidth]{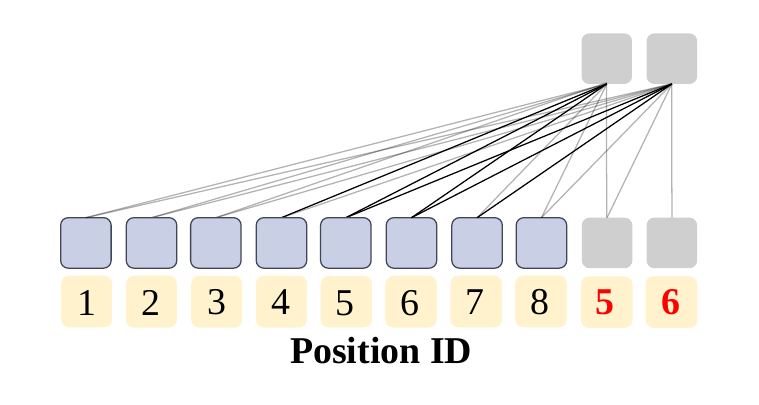}
        \caption{$\beta = 0.50$}
    \end{subfigure}

    \vspace{0.5em}

    \begin{subfigure}[b]{0.45\textwidth}
        \includegraphics[width=\textwidth]{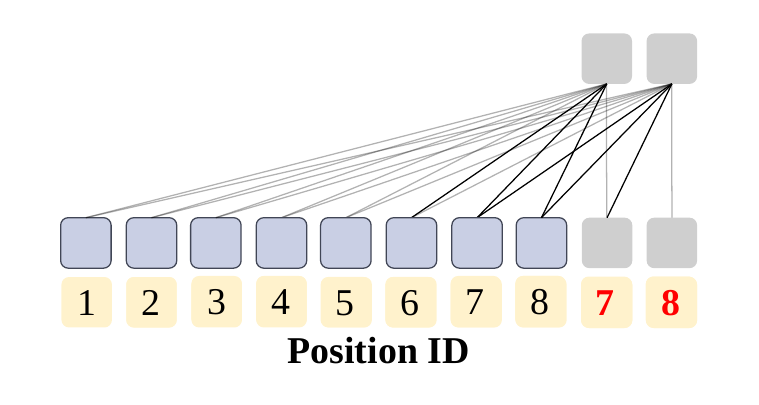}
        \caption{$\beta = 0.75$}
    \end{subfigure}
    \hfill
    \begin{subfigure}[b]{0.45\textwidth}
        \includegraphics[width=\textwidth]{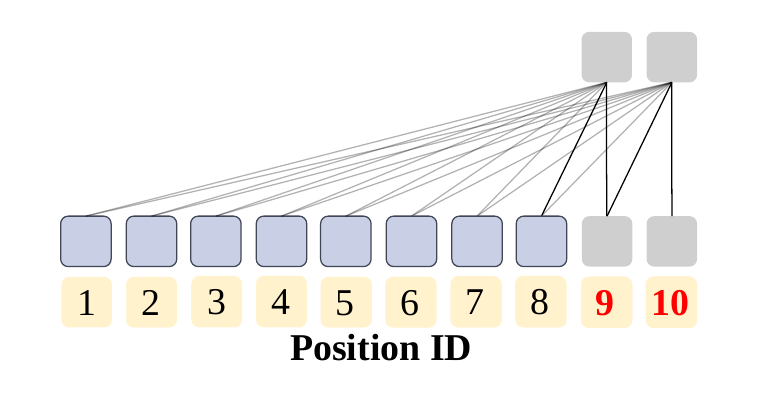}
        \caption{$\beta = 1.00$}
    \end{subfigure}
    \caption{Without modifying the attention mask, the bias factor $\beta$ is used to control the Positional Encodings (PEs) of learnable tokens. Subfigures (a) to (d) correspond to the changes in the PEs of learnable tokens when $\beta$ from $\{0.25, 0.50, 0.75, 1.00\}$.}
    \label{fig:Position_Encoding_Bias}
\end{figure}
To analyze the impact of \textit{position bias} on methods based on learnable tokens, we define the \textit{position bias factor} $\beta \in [0, 1]$, which is used to determine the starting Positional Encoding (PE) $pos_{\text{insert}}$ of these learnable tokens:
\begin{equation}
    pos_{\text{insert}} = \lfloor L \cdot \beta \rfloor+1 \, ,
\end{equation}
where $L$ denotes length of the input sequence. 

Consequently, the $K$ learnable tokens are assigned as a contiguous segment with position IDs $\{pos_{\text{insert}}, \dots, pos_{\text{insert}} + K - 1\}$, as depicted in Figure~\ref{fig:Position_Encoding_Bias}.

\section{Theoretical Analysis of Position Bias Mitigation}
\label{apx:theo}

\subsection*{Overview}
We establish three properties of the semantic  aggregation operator $\mathcal{A}$:
\textbf{(i)}~permutation invariance (Theorem.~\ref{thm:perm}), which structurally eliminates
ordering-dependent position bias~\citep{DBLP:journals/tacl/LiuLHPBPL24};
\textbf{(ii)}~positional noise attenuation at rate
$\mathcal{O}(|\mathcal{G}_k|^{-1})$ (Theorem.~\ref{thm:attn}), which bounds residual
variance after compression; and
\textbf{(iii)}~equivalence to a query-adaptive Nyström approximation
(Proposition.~\ref{prop:nystrom}), which connects the compression to kernel methods~\citep{DBLP:conf/nips/WilliamsS00,DBLP:conf/aaai/XiongZCTFLS21}.
Section~\ref{sec:pe_examples} instantiates these results on sinusoidal and RoPE encodings.

\subsection{Preliminaries}

Let $H = \{h_i\}_{i=1}^n \subset \mathbb{R}^d$ be the encoder hidden states,
decomposed as
\begin{equation}
    h_i = s_i + p_i,
\end{equation}
            where $s_i$ represents the position-invariant semantic component, while $p_i$ denotes the positional component. The semantic aggregation operator $\mathcal{A}$ with compression rate
$\tau$ produces $K = max(2, \lceil n/\tau \rceil)$ 

aggregated tokens:

\begin{equation}
    \alpha_i = \frac{\exp(w_i)}{\sum_{j \in \mathcal{G}_k} \exp(w_j)}, \quad \tilde{h}_k = \sum_{i \in \mathcal{G}_k} \alpha_i \cdot h_i,
    \label{eq:zk}
\end{equation} 
where $\mathcal{G}_k = \{i : k = \arg\max_j {v}(i,j)\}$ is the
Voronoi partition~\citep{DBLP:journals/tit/Lloyd82} induced by centroids $\{c_k\}$, with ${w}_i$ as defined in Eq.~\eqref{eq:weight}.

\subsection{Permutation Invariance}
\begin{theorem}[Permutation Invariance]
\label{thm:perm}
For any permutation $\sigma\in S_n$, where $S_n$ denotes the symmetric group
of all $n!$ bijections on $\{1,\dots,n\}$,
\begin{equation}
  \mathcal{A}((h_1,\dots,h_n)) = \mathcal{A}((h_{\sigma(1)},\dots,h_{\sigma(n)})),
\end{equation}
where $\mathcal{A}:\mathbb{R}^{n\times d}\to\mathbb{R}^{K\times d}$ is the
semantic aggregation operator defined in Eq.~\eqref{eq:zk}, and
$(h_{\sigma(1)},\dots,h_{\sigma(n)})$ denotes the input sequence reindexed
under $\sigma$.
\end{theorem}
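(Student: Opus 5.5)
The plan is to treat $\mathcal{A}$ as a composition of maps, each defined pointwise on the vectors $h_i$ or as a function of a finite \emph{set} (multiset) of vectors, and never on the indices themselves. I would then show that each stage is equivariant under reindexing by $\sigma$, and that the final stage is invariant. Throughout, $\bar q$ is held fixed: it is computed from $H^{\text{query}}$, which the permutation does not touch. The theorem concerns the operator $\mathcal{A}$ acting on given hidden states, not the encoder that produced them.

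\textbf{Step 1: relevance scores.} By Eq.~\eqref{eq:relevance}, $r_i = \cos(h_i,\bar q)$ depends only on the vector $h_i$. Write $h'_j = h_{\sigma(j)}$ for the permuted input. Then $r'_j = r_{\sigma(j)}$, so the multiset of (vector, score) pairs is unchanged.

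\textbf{Step 2: center selection.} The top-$K$ set consists of the $K$ largest scores. Since $K=\max(2,\lceil n/\tau\rceil)$ depends only on $n$, the selected center set satisfies $\mathcal{C}' = \sigma^{-1}(\mathcal{C})$. Hence the set of center vectors $\{h_j\}_{j\in\mathcal{C}}$ is identical for both inputs.

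\textbf{Step 3: assignment and weights.} For a non-center token, $s_{i,k}$ (Eq.~\eqref{eq:similarity}) and $v(i,k)$ (Eq.~\eqref{eq:value}) are functions of the pair $(h_i, h^{\text{center}}_k)$ and $\bar q$ only. So the argmax center of each token vector is the same center vector in both inputs. Consequently each group $\mathcal{G}_k$, viewed as a multiset of vectors attached to a given center vector, is unchanged. The weights $w_i$ of Eq.~\eqref{eq:weight} are likewise determined by the vector and its center.

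\textbf{Step 4: aggregation.} The softmax weights $\alpha_i$ and the sum $\tilde h_k=\sum_{i\in\mathcal{G}_k}\alpha_i h_i$ in Eq.~\eqref{eq:zk} are symmetric functions over the group. They are therefore invariant to how the members are indexed, so each $\tilde h_k$ coincides for $H$ and its permutation.

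\textbf{Main obstacle: ties and row ordering.} The statement asserts equality in $\mathbb{R}^{K\times d}$, so two index-dependent choices must be pinned down.

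First, \emph{ties}. If two scores $r_i$ tie at the top-$K$ boundary, or two centers tie in $\arg\max_k v(i,k)$, an index-based tie-break would depend on $\sigma$. I would handle this by assuming all relevant scores are distinct, which holds almost surely for continuous hidden states. Alternatively, one can require a tie-breaking rule that depends only on the vectors, for instance lexicographic order on $h_i$.

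Second, \emph{the order of the $K$ output rows}. If the rows were listed by the original positions of the centers, a permutation would reorder the rows, and only set-level invariance would survive. I would therefore fix a canonical ordering, namely centers sorted by decreasing $r$, which is position-free by Step 1. With that convention the rows match exactly. Otherwise I would state the result as equality up to a fixed reordering of rows (equality of the multisets $\{\tilde h_k\}$).

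Under either convention, composing Steps 1--4 yields the equality claimed in Theorem~\ref{thm:perm}.
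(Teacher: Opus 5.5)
Your proof is correct and follows essentially the same route as the paper's: the centers are a function of the multiset of hidden states (with $\bar q$ held fixed), group membership depends only on vector values, and the softmax-weighted merge is symmetric within each group. Your explicit handling of ties (a genericity assumption or a content-based tie-break) and of the output row order (for example, centers sorted by decreasing $r$) supplies conventions the paper's proof leaves implicit, and these are needed for literal equality in $\mathbb{R}^{K\times d}$.
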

\begin{proof}
The centroid $c_k$ is computed as a symmetric function of the input
multiset $\{\!\{h_1,\dots,h_n\}\!\}$, so it is unaffected by any permutation
$\sigma\in S_n$. Given fixed centroids, compression group membership $i\in\mathcal{G}_k$
is determined solely by the value of $h_i$, not by the index $i$. A
permutation $\sigma$ therefore preserves the content multiset
$\mathcal{H}_k=\{\!\{h_i:i\in\mathcal{G}_k\}\!\}$ of every compression group.
Since $\tilde{h}_k$ is a symmetric (softmax-weighted) function of
$\mathcal{H}_k$, it is invariant to the
ordering of its elements, and the conclusion follows.
\end{proof}

\subsection{Positional Noise Attenuation}

Let $\tilde{s}_k = \sum_{i\in\mathcal{G}_k}\alpha_i s_i$ be the target semantic
aggregation representation, where $\{\alpha_i\}_{i\in\mathcal{G}_k}$ are non-negative weights satisfying
$\sum_{i\in\mathcal{G}_k}\alpha_i = 1$. Define the positional residual
$\epsilon_k = \tilde{h}_k - \tilde{s}_k = \sum_{i\in\mathcal{G}_k}\alpha_i p_i$.

\begin{theorem}[Noise Attenuation]
\label{thm:attn}
Assume $\{p_i\}_{i\in\mathcal{G}_k}$ are independent with
$\mathbb{E}[p_i]=0$ and $\mathrm{Cov}(p_i)=\Sigma_p$ for all $i\in\mathcal{G}_k$.
Then
\begin{equation}
    \mathbb{E}\|\tilde{h}_k-\tilde{s}_k\|^2
    = \Bigl(\textstyle\sum_{i\in\mathcal{G}_k}\alpha_i^2\Bigr)\,\mathrm{tr}(\Sigma_p).
\end{equation}
In particular, under uniform weights $\alpha_i = 1/|\mathcal{G}_k|$, justified
by semantic redundancy within a high-density cluster, this reduces to
\begin{equation}
    \mathbb{E}\|\tilde{h}_k-\tilde{s}_k\|^2 = \frac{\mathrm{tr}(\Sigma_p)}{|\mathcal{G}_k|}.
\end{equation}
\end{theorem}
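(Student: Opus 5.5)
We will prove Theorem~\ref{thm:attn} by a direct second-moment computation. Throughout, the group $\mathcal{G}_k$ and the weights $\{\alpha_i\}_{i\in\mathcal{G}_k}$ are treated as fixed (deterministic) with respect to the randomness of the positional components $\{p_i\}$. In the paper's setting the weights depend on $h_i$ and hence on $p_i$, so this needs an explicit justification; see the final paragraph.

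The starting point is the identity $\epsilon_k=\tilde h_k-\tilde s_k=\sum_{i\in\mathcal{G}_k}\alpha_i p_i$. It follows from $h_i=s_i+p_i$ and the linearity of the aggregation in Eq.~\eqref{eq:zk} once the weights are frozen. Expanding the squared norm gives
\begin{equation}
\|\epsilon_k\|^2=\sum_{i,j\in\mathcal{G}_k}\alpha_i\alpha_j\, p_i^\top p_j .
\end{equation}
Taking expectations term by term, there are two cases.
\begin{itemize}
\item For $i\neq j$: independence and $\mathbb{E}[p_i]=0$ give $\mathbb{E}[p_i^\top p_j]=\mathbb{E}[p_i]^\top\mathbb{E}[p_j]=0$.
\item For $i=j$: we use $\mathbb{E}[p_i^\top p_i]=\mathrm{tr}\,\mathbb{E}[p_ip_i^\top]=\mathrm{tr}(\Sigma_p)$. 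This holds because the mean is zero, so the second moment equals the covariance.
\end{itemize}
Only the diagonal terms survive, which yields
\begin{equation}
\mathbb{E}\|\epsilon_k\|^2=\Bigl(\textstyle\sum_{i\in\mathcal{G}_k}\alpha_i^2\Bigr)\mathrm{tr}(\Sigma_p),
\end{equation}
the first claim.

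For the uniform case we substitute $\alpha_i=1/|\mathcal{G}_k|$, so $\sum_{i\in\mathcal{G}_k}\alpha_i^2=|\mathcal{G}_k|\cdot|\mathcal{G}_k|^{-2}=|\mathcal{G}_k|^{-1}$. This gives the stated $\mathrm{tr}(\Sigma_p)/|\mathcal{G}_k|$. It is also worth recording that, by Cauchy--Schwarz, $\sum_{i\in\mathcal{G}_k}\alpha_i^2\ge 1/|\mathcal{G}_k|$ for any weights on the simplex, with equality exactly at the uniform weights. Uniform weighting is therefore the best-case attenuation, and the general formula quantifies how the softmax weights degrade it. This supports the $\mathcal{O}(|\mathcal{G}_k|^{-1})$ rate claimed in the Overview.

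The main obstacle is the conditioning issue flagged above. In SeCo both $\mathcal{G}_k$ and the softmax weights $\alpha_i$ are functions of the $h_i$, hence of the $p_i$, so the expectation cannot naively be pushed through the weights. I would handle this by stating the result conditionally: assume that, given the group assignment and weights, the $p_i$ remain independent, mean-zero, with covariance $\Sigma_p$. An alternative that yields the same conclusion is to assume the weights are computed from the semantic parts $s_i$ alone. The computation above then goes through verbatim under $\mathbb{E}[\,\cdot\mid \mathcal{G}_k,\alpha]$. Averaging over the assignments preserves the identity for each realized group, and all remaining steps are routine.
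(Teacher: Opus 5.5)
Your proof is correct and is essentially the paper's argument: the paper writes $\mathbb{E}\|\epsilon_k\|^2=\mathrm{tr}(\mathrm{Cov}(\epsilon_k))$ and uses independence to get $\mathrm{Cov}(\epsilon_k)=(\sum_i\alpha_i^2)\Sigma_p$, which is the matrix form of your cross-term cancellation, and it tacitly treats the weights as fixed, which is exactly the conditioning you make explicit. One small correction to your side remark: Cauchy--Schwarz gives only the lower bound $\sum_i\alpha_i^2\ge 1/|\mathcal{G}_k|$, so it does not support the $\mathcal{O}(|\mathcal{G}_k|^{-1})$ rate for SeCo's softmax weights; that rate follows instead from $w_i\in[-1,1]$, which gives $\alpha_i\le e^{2}/|\mathcal{G}_k|$ and hence $\sum_i\alpha_i^2\le e^{2}/|\mathcal{G}_k|$.
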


\begin{proof}
Since $\mathbb{E}[p_i]=0$ for all $i$, linearity of expectation gives
$\mathbb{E}[\epsilon_k] = \sum_{i\in\mathcal{G}_k}\alpha_i\,\mathbb{E}[p_i] = 0$,
so the bias term vanishes and
\begin{equation*}
    \mathbb{E}\|\epsilon_k\|^2
    = \mathbb{E}\|\epsilon_k - \mathbb{E}[\epsilon_k]\|^2
    = \mathrm{tr}\bigl(\mathrm{Cov}(\epsilon_k)\bigr).
\end{equation*}
By independence of $\{p_i\}$ and the identical covariance structure,
\begin{equation*}
    \mathrm{Cov}(\epsilon_k)
    = \sum_{i\in\mathcal{G}_k}\alpha_i^2\,\mathrm{Cov}(p_i)
    = \Bigl(\sum_{i\in\mathcal{G}_k}\alpha_i^2\Bigr)\Sigma_p,
\end{equation*}
and thus
$\mathrm{tr}(\mathrm{Cov}(\epsilon_k))
= \bigl(\sum_{i\in\mathcal{G}_k}\alpha_i^2\bigr)\,\mathrm{tr}(\Sigma_p)$.
Under uniform weights $\alpha_i = 1/|\mathcal{G}_k|$, we have
$\sum_{i}\alpha_i^2 = 1/|\mathcal{G}_k|$, yielding the stated bound.
\end{proof}

\begin{remark}[Correlated noise]
If positional components are stationary with integer indices $i,j\in\mathcal{G}_k$
and $\mathrm{Cov}(p_i, p_j) = \Sigma_p\,\rho(|i-j|)$ for some scalar
function $\rho:\mathbb{Z}_{\geq 0}\to[-1,1]$ with $\rho(0)=1$ (Toeplitz
structure), then under uniform weights
\begin{equation*}
    \mathrm{Cov}(\epsilon_k)
    = \frac{\Sigma_p}{|\mathcal{G}_k|^2}
      \sum_{i,j\in\mathcal{G}_k}\rho(|i-j|).
\end{equation*}
A sufficient condition for preserving the $\mathcal{O}(|\mathcal{G}_k|^{-1})$
rate of Theorem~\ref{thm:attn} is absolute summability of the correlation
function, i.e., $\sum_{\ell=0}^{\infty}|\rho(\ell)|<\infty$, which holds in
particular when $\rho(\ell)$ decays exponentially. Under this condition,
$\frac{1}{|\mathcal{G}_k|}\sum_{i,j}\rho(|i-j|) = \mathcal{O}(1)$, so
$\mathrm{tr}(\mathrm{Cov}(\epsilon_k)) = \mathcal{O}(|\mathcal{G}_k|^{-1})\,\mathrm{tr}(\Sigma_p)$,
and the off-diagonal contributions remain lower-order.
\end{remark}

\subsection{Nyström Approximation Perspective}

\begin{proposition}[Structural Correspondence to Nyström]
\label{prop:nystrom}
Define the kernel matrices $\mathbf{K}_H\in\mathbb{R}^{n\times n}$,
$\mathbf{K}_{HZ}\in\mathbb{R}^{n\times K}$, and
$\mathbf{K}_{ZZ}\in\mathbb{R}^{K\times K}$ with entries
$(\mathbf{K}_H)_{ij}=s_{i,j}$, $(\mathbf{K}_{HZ})_{ik}=s_{i,k}$,
and $(\mathbf{K}_{ZZ})_{kl}=s_{k,l}$, respectively,
where the similarity $s_{\cdot,\cdot}$ is given by Eq.~\eqref{eq:similarity}.
Then the aggregated representations $\{\tilde{h}_k\}$ satisfy
\begin{equation}
    \tilde{H} = A \cdot H,
    \qquad A_{ki} = \alpha_i^{(k)}\cdot\mathbf{1}[i\in\mathcal{G}_k],
\end{equation}
where the assignment matrix $A \in \mathbb{R}^{K \times n}$
shares the similarity-weighted structure of the Nyström projection
$\mathbf{K}_{ZZ}^{-1}\mathbf{K}_{ZH}$ in the sense that both operators
assign weight to token $i$ under landmark $k$ proportionally to $s_{k,i}$:
the former via softmax normalization within $\mathcal{G}_k$,
the latter via global kernel inversion.
\end{proposition}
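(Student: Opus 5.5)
The plan has two parts. First comes an exact linear-algebra identity, $\tilde H = AH$. Second comes a structural comparison with the Nyström projection, which is a qualitative statement rather than an equality.

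For the identity, I write $\tilde h_k=\sum_{i\in\mathcal{G}_k}\alpha_i^{(k)}h_i$ from Eq.~\eqref{eq:zk}. The superscript records that the softmax normalisation is taken within group $k$. I then extend the sum to all $i\in\{1,\dots,n\}$ by inserting the indicator $\mathbf{1}[i\in\mathcal{G}_k]$. This gives $\tilde h_k=\sum_{i=1}^n A_{ki}h_i$, i.e.\ the $k$-th row of $AH$. Stacking the $K$ rows gives $\tilde H=AH$. Along the way I note three properties of $A$. It is row-stochastic, because softmax weights are nonnegative and sum to one on $\mathcal{G}_k$. It has disjoint row supports, because the argmax assignment defines a partition; ties are broken by any fixed rule, and each center lies in its own group since $s_{k,k}=1$ is maximal. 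Finally, $A$ depends on $H$ only through the similarities $s_{\cdot,\cdot}$ and the relevances $r_\cdot$.

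For the correspondence, I recall the Nyström approximation $\mathbf K_H\approx \mathbf K_{HZ}\mathbf K_{ZZ}^{-1}\mathbf K_{ZH}$. Its landmark projection $P=\mathbf K_{ZZ}^{-1}\mathbf K_{ZH}$ expresses each token in landmark coordinates. Column $i$ of $\mathbf K_{ZH}$ is $(s_{1,i},\dots,s_{K,i})^\top$, so before the global mixing by $\mathbf K_{ZZ}^{-1}$ the weight of token $i$ under landmark $k$ is exactly $s_{k,i}$. For $A$, I use Eq.~\eqref{eq:weight} to write $A_{ki}\propto \exp(r_i s_{i,k})\,\mathbf 1[i\in\mathcal{G}_k]$. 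This is a monotone (exponential, query-rescaled) function of $s_{k,i}$, restricted to the cell where $s_{k,i}$ (times $r_i$) is maximal. The centers themselves get weight $\exp(r_k)$.

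I then make the comparison explicit with two remarks. First, if the centers are near-orthogonal, then $\mathbf K_{ZZ}\approx I$ and $P\approx \mathbf K_{ZH}$. Second, hardening the softmax assignment by keeping only the argmax landmark per column and row-normalising $\mathbf K_{ZH}$ yields the sparsity pattern of $A$. Replacing $s$ by $\exp(r\,s)$ gives $A$ exactly. So both operators assign token $i$ to landmark $k$ with a weight increasing in $s_{k,i}$. The difference is that $A$ normalises locally within $\mathcal{G}_k$ via softmax, while Nyström normalises globally via $\mathbf K_{ZZ}^{-1}$. This is precisely the claimed shared structure.

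The main obstacle is that ``shares the similarity-weighted structure'' is not a precise mathematical relation. Monotonicity in $s_{k,i}$ also fails literally when $r_i<0$. I would handle this by stating the correspondence under explicit assumptions: nonnegative relevance $r_i\ge 0$ for non-center tokens, and $\mathbf K_{ZZ}$ close to the identity. Under these assumptions the entrywise monotone relationship and the support pattern can be verified directly.
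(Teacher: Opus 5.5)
Your proposal follows the paper's proof essentially step for step. You read off $\tilde H = AH$ directly from the definition of $\tilde h_k$ by inserting the indicator, and you argue the Nyström correspondence qualitatively from the facts that $(\mathbf{K}_{ZH})_{ki}=s_{k,i}$ and that $\alpha_i^{(k)}$ is a within-group softmax of a function of that same similarity. Your hedges are sharper than the paper's, which claims proportionality to $s_{k,i}$ whenever $\mathbf{K}_{ZZ}$ is merely ``well-conditioned''; your condition $\mathbf{K}_{ZZ}\approx I$ (near-orthogonal centers) is what actually gives $\mathbf{K}_{ZZ}^{-1}\mathbf{K}_{ZH}\approx\mathbf{K}_{ZH}$. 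You also correctly note that $A_{ki}\propto\exp(r_i s_{i,k})$ is only monotone in $s_{k,i}$, and only when $r_i\ge 0$, so literal proportionality is not available.
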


\begin{proof}
By definition, $\tilde{h}_k = \sum_{i \in \mathcal{G}_k} \alpha_i h_i$,
so the full output $\tilde{H} \in \mathbb{R}^{K \times d}$ is a linear map
$A \cdot H$ where $A_{ki} = \alpha_i^{(k)} \cdot \mathbf{1}[i \in \mathcal{G}_k]$.
The Nyström approximation reconstructs
$\mathbf{K}_H \approx \mathbf{K}_{HZ}\mathbf{K}_{ZZ}^{-1}\mathbf{K}_{ZH}$,
where the per-landmark reconstruction weight for token $i$ under landmark $k$
is given by $(\mathbf{K}_{ZZ}^{-1}\mathbf{K}_{ZH})_{ki}$, which is
proportional to $s_{k,i}$ when $\mathbf{K}_{ZZ}$ is well-conditioned.
Since $\alpha_i^{(k)}$ is derived from the same similarity $s_{k,i}$
via softmax normalization over $\mathcal{G}_k$ (Eq.~\eqref{eq:zk}),
$A$ shares the same similarity-weighted structure restricted to each
partition cell, establishing the structural correspondence.
\end{proof}

\begin{remark}[Query-Adaptive Landmarks and Approximation Quality]
\label{rem:nystrom_quality}
The Nyström approximation error for a rank-$K$ reconstruction is known
to depend on how well the landmark subspace
$\mathrm{span}\{c_k\}$ captures the dominant eigendirections of
$\mathbf{K}_H$~\citep{DBLP:conf/nips/WilliamsS00}.
Fixed-position landmarks (e.g., uniform or stride-based sampling)
align with positional eigendirections that may be orthogonal to the
query-relevant subspace.
In contrast, SeCo selects landmarks by maximizing similarity between $c_k$ and $\bar{q}$,
concentrating $\mathrm{span}\{c_k\}$ on query-relevant eigendirections
of $\mathbf{K}_H$.
Under the standard Nyström error decomposition~\citep{DBLP:conf/aaai/XiongZCTFLS21},
this implies lower reconstruction error on the query-relevant subspace.

\end{remark}

\subsection{Instantiation on Sinusoidal and RoPE Encodings}
\label{sec:pe_examples}

We instantiate Theorems~\ref{thm:perm}--\ref{thm:attn} under the query-aware
semantic aggregation of SeCo, focusing on the dominant behavior of positional residuals.

\subsubsection{Sinusoidal Positional Encoding}

In sinusoidal encoding, the positional component is additive and independent
of semantics.

\paragraph{Permutation Invariance.}
Since both semantic assignment and consistency merging depend only on semantic similarity and
query relevance, aggregation is permutation invariant.

\paragraph{Noise Attenuation.}

\begin{theorem}[Sinusoidal Case]
Under bounded and normalized weights, the positional residual in each compression group
satisfies
\begin{equation}
\mathbb{E}\|\epsilon_k\|^2 = \mathcal{O}(|\mathcal{G}_k|^{-1}).
\end{equation}
\end{theorem}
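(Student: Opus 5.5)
The plan is to reduce the statement to Theorem~\ref{thm:attn}, together with the correlated-noise remark, after modelling the sinusoidal positional component. With sinusoidal encodings the positional component is $p_i = \mathrm{PE}(\pi_i)$, where $\pi_i$ is the original index of token $i$. Its coordinates are $\sin(\omega_m \pi_i)$ and $\cos(\omega_m \pi_i)$ over $d/2$ frequencies $\omega_m$, so $\|p_i\|^2 = d/2$ deterministically. To place this in the setting of Theorem~\ref{thm:attn}, I treat the positions of the members of $\mathcal{G}_k$ as random. I model them as uniform phases, or as a random offset shared by the group, and subtract the mean $\bar p$ into the semantic part, $s_i \leftarrow s_i + \bar p$. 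This makes $\mathbb{E}[p_i]=0$ and $\mathrm{Cov}(p_i)=\Sigma_p$ with $\mathrm{tr}(\Sigma_p)\le d/2$.

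Step one is the \emph{bounded weights} reduction. Since $w_i \in [-1,1]$ (the $w_i$ are products of cosines, Eq.~\eqref{eq:weight}), the softmax weights satisfy
\begin{equation*}
\alpha_i \le \frac{e}{\sum_{j\in\mathcal{G}_k}e^{-1}} = \frac{e^2}{|\mathcal{G}_k|},
\end{equation*}
hence
\begin{equation*}
\sum_{i\in\mathcal{G}_k}\alpha_i^2 \le \max_i\alpha_i \le \frac{e^2}{|\mathcal{G}_k|}.
\end{equation*}
This replaces the uniform-weight assumption of Theorem~\ref{thm:attn} by a constant factor. I will also treat the weights as independent of positions, which Theorem~\ref{thm:perm}'s argument supports: assignment depends only on content, and $s_i$ is position-invariant.

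Step two handles the dependence between positions. For independent positions, Theorem~\ref{thm:attn} applies directly and gives
\begin{equation*}
\mathbb{E}\|\epsilon_k\|^2 \le \frac{e^2 \,\mathrm{tr}(\Sigma_p)}{|\mathcal{G}_k|} = \mathcal{O}(|\mathcal{G}_k|^{-1}).
\end{equation*}
Positions are distinct integers, though, so I will also cover the Toeplitz case of the remark. For sinusoids the correlation is
\begin{equation*}
\mathrm{Cov}(p_i,p_j)\propto \sum_m \cos(\omega_m(\pi_i-\pi_j)),
\end{equation*}
and I need $\sum_{i,j}\alpha_i\alpha_j\rho(|\pi_i-\pi_j|) = \mathcal{O}(|\mathcal{G}_k|^{-1})$. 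Using $\alpha_i\le e^2/|\mathcal{G}_k|$, this reduces to showing
\begin{equation*}
|\mathcal{G}_k|^{-1}\sum_{i,j}|\rho(|\pi_i-\pi_j|)| = \mathcal{O}(1).
\end{equation*}

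The main obstacle is this last step. Pure cosines are not absolutely summable, so the summability condition of the remark fails literally. My first attempt is to average over frequencies. The geometric spectrum $\omega_m = 10000^{-2m/d}$ makes $\frac{2}{d}\sum_m\cos(\omega_m \ell)$ decay roughly like $1/\log$ or better in $\ell$. If that is insufficient, I would fall back on the independent-phase model, where the cross terms vanish in expectation, and state the result under that modelling assumption.
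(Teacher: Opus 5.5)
Your proposal is essentially the paper's own argument made explicit. The paper's sketch also splits $\mathbb{E}\|\epsilon_k\|^2$ into a diagonal part proportional to $\sum_{i}\alpha_i^2$, which ``bounded weights'' make $\mathcal{O}(|\mathcal{G}_k|^{-1})$, and your bound $\alpha_i\le e^2/|\mathcal{G}_k|$ from $w_i\in[-1,1]$ is a clean way to make that precise. It then disposes of the cross terms by appealing to ``weak correlation between phases,'' which is exactly your independent-phase fallback.

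That fallback is not optional, though. Your frequency-averaging attempt cannot work: the averaged correlation $\frac{2}{d}\sum_m\cos(\omega_m\ell)$ decays only roughly like $1-\log_{10}(\ell)/4$, so it is still about $0.2$ at $\ell=1000$. The Toeplitz route therefore gives no decay at all for deterministic positions. The result, in your version as in the paper's, rests on two modelling assumptions. One is the random-phase model. The other is treating the weights as independent of $p_i$, which Theorem~\ref{thm:perm} does not actually supply, because the weights are computed from $h_i=s_i+p_i$.
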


\begin{proof}[Sketch]
The residual takes the form
\begin{equation}
\epsilon_k = \sum_{i\in\mathcal{G}_k} \alpha_i \sin(i\omega),
\end{equation}
which is a weighted average of bounded oscillatory terms. Since $\sin(i\omega)$
is approximately zero-mean over phases, and SeCo assigns tokens based on semantic
similarity rather than position, the indices within $\mathcal{G}_k$ can be viewed
as approximately unordered samples.

Expanding the second moment,
\begin{equation}
\mathbb{E}\|\epsilon_k\|^2
= \sum_i \alpha_i^2 \mathbb{E}[\sin^2(i\omega)]
+ \sum_{i\neq j} \alpha_i \alpha_j \mathbb{E}[\sin(i\omega)\sin(j\omega)].
\end{equation}

The cross terms are suppressed due to oscillation and weak correlation between
phases, leaving the dominant contribution proportional to $\sum_i \alpha_i^2$.
Under bounded weights, $\sum_i \alpha_i^2 = \mathcal{O}(|\mathcal{G}_k|^{-1})$,
which yields the stated result.
\end{proof}

\subsubsection{Rotary Position Embedding (RoPE)}

RoPE introduces position-dependent rotations, leading to a residual that
depends on both content and position.

\paragraph{Permutation Invariance.}
Permutation invariance does not hold strictly, but is partially mitigated by
semantic aggregation.

\paragraph{Noise Attenuation and Bias.}

\begin{theorem}[RoPE Case]
Assume bounded hidden states and smooth variation of rotations. Then the
residual satisfies
\begin{equation}
\|\epsilon_k^{(\mathrm{R})}\|
= \mathcal{O}(\Delta\theta_k) + \mathcal{O}(|\mathcal{G}_k|^{-1/2}),
\end{equation}
where $\Delta\theta_k$ is the angular spread within the compression group.
\end{theorem}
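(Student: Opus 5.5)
The plan is to make the RoPE residual explicit and split it into a deterministic \emph{rotation-spread} term and a stochastic \emph{fluctuation} term. The first term will give $\mathcal{O}(\Delta\theta_k)$ through a Lipschitz argument. The second will give $\mathcal{O}(|\mathcal{G}_k|^{-1/2})$ by reusing Theorem~\ref{thm:attn}.

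\textbf{Setup.} Under RoPE, write $h_i = R(\theta_i)x_i$, where $x_i$ is the position-free content vector and $R(\theta_i)$ is the block-diagonal rotation at position $i$. By the boundedness assumption, $\|x_i\|\le B$. Fix a reference angle for the group, namely the weighted mean $\bar\theta_k=\sum_{i\in\mathcal{G}_k}\alpha_i\theta_i$ (taken blockwise). Define the angular spread $\Delta\theta_k=\max_{i\in\mathcal{G}_k}\|\theta_i-\bar\theta_k\|_\infty$. Since a rotation applied uniformly to the whole group can be absorbed by the decoder, the semantic target is the uniformly rotated aggregate $R(\bar\theta_k)\tilde s_k$, where $\tilde s_k=\sum_i\alpha_i s_i$ as before. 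Set
\begin{equation*}
\epsilon_k^{(\mathrm{R})}=\tilde h_k-R(\bar\theta_k)\tilde s_k .
\end{equation*}
Add and subtract $R(\bar\theta_k)x_i$ inside the sum to obtain
\begin{equation*}
\epsilon_k^{(\mathrm{R})}=\underbrace{\sum_{i\in\mathcal{G}_k}\alpha_i\bigl(R(\theta_i)-R(\bar\theta_k)\bigr)x_i}_{T_1}
+\underbrace{R(\bar\theta_k)\sum_{i\in\mathcal{G}_k}\alpha_i\,(x_i-s_i)}_{T_2}.
\end{equation*}
The triangle inequality then gives $\|\epsilon_k^{(\mathrm{R})}\|\le\|T_1\|+\|T_2\|$.

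\textbf{Bounding $T_1$.} Each $2\times 2$ rotation block satisfies $\|R(a)-R(b)\|_{\mathrm{op}}=2|\sin((a-b)/2)|\le|a-b|$. Hence $\|R(\theta_i)-R(\bar\theta_k)\|_{\mathrm{op}}\le\Delta\theta_k$. The weights $\alpha_i$ are convex, so
\begin{equation*}
\|T_1\|\le B\,\Delta\theta_k=\mathcal{O}(\Delta\theta_k).
\end{equation*}
The smooth-variation assumption is what lets us treat $\Delta\theta_k$ as the controlling quantity.

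\textbf{Bounding $T_2$.} Rotations are isometries, so $\|T_2\|=\|\sum_i\alpha_i(x_i-s_i)\|$. Identify $p_i:=x_i-s_i$ with the content-level positional or contextual fluctuation. We then invoke Theorem~\ref{thm:attn} under its hypotheses: independence, zero mean and common covariance $\Sigma_p$. This gives $\mathbb{E}\|T_2\|^2=(\sum_i\alpha_i^2)\,\mathrm{tr}(\Sigma_p)$. Under bounded normalized weights, as in the sinusoidal case, $\sum_i\alpha_i^2=\mathcal{O}(|\mathcal{G}_k|^{-1})$. Jensen's inequality then gives $\mathbb{E}\|T_2\|\le(\mathbb{E}\|T_2\|^2)^{1/2}=\mathcal{O}(|\mathcal{G}_k|^{-1/2})$. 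A high-probability version follows from Chebyshev's inequality. If the fluctuations are correlated, the absolute-summability condition of the preceding remark on correlated noise preserves the same rate.

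\textbf{Main obstacle.} The delicate step is making the statement well posed. Under RoPE the positional part is multiplicative, not additive, so the residual must be defined relative to a common group rotation; otherwise even a perfectly coherent group has an $\mathcal{O}(1)$ residual. I would therefore state explicitly that the target is $R(\bar\theta_k)\tilde s_k$ and that the bound holds in the root-mean-square sense, i.e.\ for $(\mathbb{E}\|\epsilon_k^{(\mathrm{R})}\|^2)^{1/2}$. The independence assumption on the $x_i-s_i$ must also be granted as in Theorem~\ref{thm:attn}. This is the same modelling idealization used in the sinusoidal case, justified by the fact that SeCo's assignment ignores index order.
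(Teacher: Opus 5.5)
Your argument is sound for the residual you define, but it differs from the paper's sketch in a substantive way. The paper keeps the unrotated aggregate as the target, $\epsilon_k^{(\mathrm{R})}=\sum_{i\in\mathcal{G}_k}\alpha_i(\mathbf{R}_{\theta,i}-\mathbf{I})h_i$. It splits this around the averaged matrix $\bar{\mathbf{R}}_k=\sum_i\alpha_i\mathbf{R}_{\theta,i}$ into two pieces. The first is a mean-rotation term $(\bar{\mathbf{R}}_k-\mathbf{I})\bar h_k$, which the paper asserts is $\mathcal{O}(\Delta\theta_k)$. The second is a fluctuation term $\sum_i\alpha_i(\mathbf{R}_{\theta,i}-\bar{\mathbf{R}}_k)h_i$, which it asserts decays like $|\mathcal{G}_k|^{-1/2}$ as an average of weakly correlated zero-mean terms. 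You instead re-center at the genuine rotation $R(\bar\theta_k)$, i.e.\ RoPE at the $\alpha$-weighted mean position, and this removes the mean-rotation term entirely. Your $T_1$ is then the analogue of the paper's fluctuation term, and your Lipschitz estimate shows it is really a deterministic $\mathcal{O}(\Delta\theta_k)$ term. The $|\mathcal{G}_k|^{-1/2}$ rate comes instead from the additive content noise $x_i-s_i$ via Theorem~\ref{thm:attn}. What your route buys is that every step is checkable, and it avoids exactly the step you anticipated. The term $(\bar{\mathbf{R}}_k-\mathbf{I})\bar h_k$ is governed by how far the group's average rotation is from the identity, not by the spread. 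So if all rotations in a group are close to a common $\mathbf{R}_0\neq\mathbf{I}$, that term stays of order $\|\bar h_k\|$ even though the spread is small. The cost is that you bound a different quantity from the paper, justified only by the modelling claim that a common rotation is absorbed downstream. Your second term also rests on an imported additive-noise model rather than on rotational fluctuations: independence, zero mean, and $\alpha_i$, $\mathcal{G}_k$ treated as fixed, even though SeCo computes them from the $h_i$. With $x_i=s_i$ your bound reduces to $\mathcal{O}(\Delta\theta_k)$ alone, and it holds in the root-mean-square sense, as you note. One step can be tightened for free. Every $w_i$ is a cosine similarity or a product of two, so $w_i\in[-1,1]$. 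Hence $\alpha_i\le e^2/|\mathcal{G}_k|$ and $\sum_i\alpha_i^2\le e^2/|\mathcal{G}_k|$ with no extra assumption on the weights.
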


\begin{proof}[Sketch]
The residual can be written as
\begin{equation}
\epsilon_k^{(\mathrm{R})}
= \sum_{i\in\mathcal{G}_k} \alpha_i (\mathbf{R}_{\theta,i} - \mathbf{I}) h_i.
\end{equation}

We decompose it into a mean and fluctuation term:
\begin{equation}
\epsilon_k^{(\mathrm{R})}
= (\bar{\mathbf{R}}_k - \mathbf{I}) \bar{h}_k
+ \sum_{i\in\mathcal{G}_k} \alpha_i (\mathbf{R}_{\theta,i} - \bar{\mathbf{R}}_k) h_i,
\end{equation}
where $\bar{\mathbf{R}}_k = \sum_i \alpha_i \mathbf{R}_{\theta,i}$ and
$\bar{h}_k = \sum_i \alpha_i h_i$.

The first term represents a bias induced by the average rotation, whose magnitude
scales with the variation of rotation angles within the compression group, i.e.,
$\mathcal{O}(\Delta\theta_k)$. The second term aggregates zero-mean fluctuations
around the mean rotation and behaves like a weighted average of weakly correlated
terms, yielding a decay rate of $\mathcal{O}(|\mathcal{G}_k|^{-1/2})$. Combining both terms gives the stated bound.
\end{proof}

\section{Stress Test}
\label{apx:stress_test}
To further evaluate the robustness of methods, we conduct a stress test experiment. In this experiment, SeCo, COMI, and RAM all adopted a random compression rate strategy during the training phase: in each training iteration, we randomly selected a compression rate $\tau$ from the set $\{2, 4, 8, 16, 32\}$ (\texttt{LLaMA-3.2-1B-Instruct} as backbone). As shown in Figure \ref{fig:pressure-results}, while increasing the compression rate from 2 to 32 leads to varying degrees of performance degradation across methods, \emph{SeCo consistently maintains superior performance compared to COMI and RAM on both in-domain and out-of-domain settings.} This performance gap indicates that SeCo possesses stronger robustness under various compression constraints.

\begin{figure}[t]
  \centering
     \includegraphics[width=\textwidth]{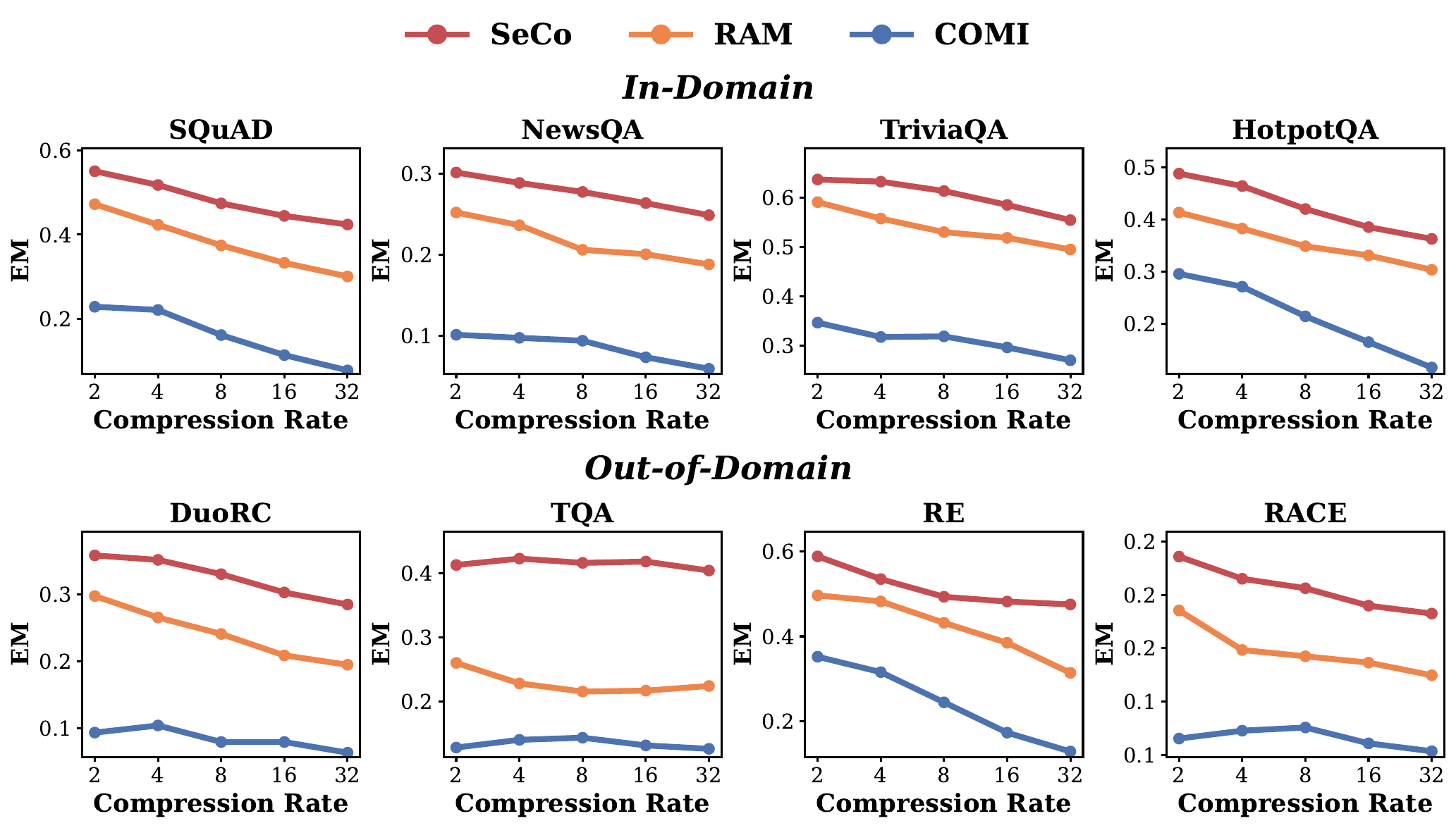}
     \caption{Exact Match (EM) across both in-domain and out-of-domain QA datasets at different compression rates $\{2,4,8,16,32\}$.}
      \label{fig:pressure-results}
\end{figure}

\section{Dataset Details}
\label{apx:dataset_details}
\paragraph{SQuAD.} SQuAD~\citep{DBLP:conf/emnlp/RajpurkarZLL16} is a reading comprehension benchmark derived from Wikipedia, where models must locate answer spans directly within the provided passage to respond to given questions.

\paragraph{NewsQA.} NewsQA~\citep{DBLP:conf/rep4nlp/TrischlerWYHSBS17} is a reading comprehension dataset built on CNN news articles, posing questions that frequently require inference beyond surface-level text matching.

\paragraph{TriviaQA.} TriviaQA~\citep{DBLP:conf/acl/JoshiCWZ17} is a large-scale QA dataset pairing independently authored trivia questions with evidence documents sourced from Wikipedia and the web.

\paragraph{SearchQA.} SearchQA~\citep{DBLP:journals/corr/DunnSHGCC17} is a QA dataset that pairs Jeopardy! clues with web-retrieved snippets as context, challenging models to synthesize information across multiple documents.

\paragraph{HotpotQA.} HotpotQA~\citep{DBLP:conf/emnlp/Yang0ZBCSM18} is a multi-hop QA dataset requiring models to jointly reason over several supporting documents, with annotated supporting facts provided to facilitate explainability.

\paragraph{NaturalQuestions.} NaturalQuestions~\citep{DBLP:journals/tacl/KwiatkowskiPRCP19} is an open-domain QA benchmark composed of real user queries issued to the Google search engine. Models must extract both short and long-form answers from Wikipedia articles, making it a realistic testbed for evaluating reading comprehension under natural information-seeking conditions.

\paragraph{DROP.} DROP~\citep{DBLP:conf/naacl/DuaWDSS019} is a reading comprehension benchmark emphasizing discrete reasoning over text, where answering questions may involve numerical operations such as addition, subtraction, counting, or sorting.

\paragraph{BioASQ.} BioASQ~\citep{DBLP:journals/bmcbi/TsatsaronisBMPZ15} is a biomedical QA dataset featuring expert-curated questions paired with PubMed abstracts, demanding specialized domain knowledge to generate exact or abstractive answers.

\paragraph{DuoRC.ParaphraseRC.} DuoRC.ParaphraseRC~\citep{DBLP:conf/acl/KhapraSSA18} is a reading comprehension dataset grounded in movie plot summaries, where the ParaphraseRC variant supplies a paraphrased context document, substantially reducing lexical overlap with the original question.

\paragraph{TextbookQA.} TextbookQA~\citep{DBLP:conf/cvpr/KembhaviSSCFH17} is a QA dataset drawn from middle school science textbooks, requiring comprehension of both textual descriptions and accompanying diagrams to answer curriculum-based questions.

\paragraph{RelationExtraction.} RelationExtraction~\citep{DBLP:conf/conll/LevySCZ17} casts relation extraction as a reading comprehension problem, mapping each relation type to a natural language question and requiring the model to extract the corresponding answer span from the input sentence.

\paragraph{RACE.} RACE~\citep{DBLP:conf/emnlp/LaiXLYH17} is a multiple-choice reading comprehension dataset sourced from English exams administered in China, encompassing diverse topics and demanding skills such as reasoning, summarization, and commonsense understanding.

\paragraph{DialogSum.} DialogSum~\citep{DBLP:conf/acl/ChenLCZ21} is a dialogue summarization dataset pairing everyday spoken conversations with human-written summaries, with the goal of distilling key information and communicative intent from conversational text.

\paragraph{NarrativeQA.} NarrativeQA~\citep{DBLP:journals/tacl/KociskySBDHMG18} is a reading comprehension dataset grounded in full-length books and movie scripts, where questions are crafted by annotators who only see human-written summaries, necessitating deep narrative understanding to answer correctly. We truncate each input to at most 32K tokens.

\section{Training Details}
\label{apx:train_details}
\noindent
\begin{paracol}{2}

We fine-tune SeCo with bf16 precision using data sampled from the MRQA benchmark \citep{DBLP:conf/acl-mrqa/FischTJSCC19}. For the QA task, we construct our training set by randomly selecting 4,000 samples per subset. LLaMA-3.2-1B-Instruct\hyperlink{fn1}{$^1$} is trained on 4 NVIDIA RTX 4090 GPUs, while Qwen3-4B-Instruct\hyperlink{fn2}{$^2$} is trained on 2 NVIDIA A800 GPUs. For the summarization task and long-context QA, we fine-tune Qwen3-4B-Instruct\hyperlink{fn2}{$^2$} on DialogSum and NarrativeQA using 2 NVIDIA A800 GPUs and 32 NVIDIA H200 GPUs, respectively. \textbf{To ensure a fair comparison, all baselines are trained using the same datasets and hyperparameters.} Detailed training configurations are summarized in Table~\ref{tab:hyperparams}.

\switchcolumn
\captionof{table}{Hyperparameters for fine-tuning.}
\label{tab:hyperparams}
\begin{tabularx}{\linewidth}{l >{\centering\arraybackslash}X}
    \toprule
    \textbf{Hyperparameter} & \textbf{Value} \\
    \midrule
    Per Device Batch Size & 1 \\
    Optimizer & AdamW \\
    Betas & $(0.9, 0.999)$ \\
    Learning Rate & $1 \times 10^{-5}$ \\
    LoRA Rank & 128 \\
    LoRA Alpha & 32 \\
    LoRA Dropout & 0.05 \\
    Target Modules & all-linear \\
    Warmup Ratio & 0.1 \\
    Weight Decay & 0.01 \\
    Scheduler & cosine \\
    Precision & bf16 \\
    \bottomrule
\end{tabularx}
\end{paracol}

\hypertarget{fn1}{}
\footnotetext[1]{https://huggingface.co/meta-llama/Llama-3.2-1B-Instruct}
\hypertarget{fn2}{}
\footnotetext[2]{https://huggingface.co/Qwen/Qwen3-4B-Instruct-2507}

\section{Limitation}
\phantomsection
\label{sec:limitations}

While SeCo effectively mitigates position bias and enhances compression quality through semantic space clustering, it still possesses certain limitations. Specifically, SeCo introduces a query-aware clustering process during the compression phase. Although this process is lightweight compared to backbone model inference and the overhead of other compression methods (see Tab.~\ref{tab:eval_lat}), it still introduces additional computational costs during the end-to-end generation stage. Notably, however, the process of assigning non-centroid tokens to cluster centers is inherently parallelizable, which partially alleviates the aforementioned efficiency concerns. 

\section{Ethic Statement}
\label{apx:Ethic Statement}
This work presents SeCo, a framework based on the encoder-decoder architecture that enables task-aware context compression by performing clustering in the semantic space.
Regarding the broader societal impact, SeCo enhances the efficiency and efficacy of context compression within machine learning pipelines. We have evaluated the potential implications of this framework and concluded that it does not introduce novel ethical risks or societal harms beyond those inherent in existing context compression technologies. We remain committed to ethical AI development and will continue to monitor the implications of our work as it is integrated into broader applications.

\section{Language Model Usage Statement}
\label{apx:LLM Usage Statement}
Throughout the development of this paper, we integrated a large language model into our iterative drafting process to assist in synthesizing our technical notes into a coherent narrative. The model served as a supportive tool for structuring arguments and ensuring consistent terminology, rather than generating the underlying research content. We exercised complete editorial oversight, conducting rigorous manual reviews of all AI-generated suggestions to ensure scientific precision. The intellectual framework, the validity of the proofs, and the integrity of all empirical findings remain exclusively the product of the authors.

\end{document}